\documentclass[runningheads]{llncs}
\usepackage[year=2026]{eccv}
\usepackage{eccvabbrv}
\usepackage{graphicx}
\usepackage{booktabs}
\usepackage{caption}
\usepackage{graphicx}
\usepackage[ruled,vlined]{algorithm2e}
\usepackage[table]{xcolor}
\usepackage{soul}

\usepackage[accsupp]{axessibility}  
\usepackage{hyperref}

\usepackage{orcidlink}

\usepackage{booktabs}   
\usepackage{multirow}   
\usepackage{tabularx}   
\usepackage{array}      
\usepackage{makecell}   

\newcommand{\figref}[1]{Fig.~\ref{#1}}
\newcommand{\tabref}[1]{Tab~\ref{#1}}

\begin{document}

\title{Efficient Test-Time Optimization for Depth Completion via Low-Rank Decoder Adaptation} 

\titlerunning{Depth in One Rank}

\author{
Minseok Seo$^{*}$ \and
Wonjun Lee$^{*}$ \and
Jaehyuk Jang \and
Changick Kim$^{\dagger}$
}

\institute{
Korea Advanced Institute of Science and Technology (KAIST),\\
Daejeon, Republic of Korea\\
\email{minseok.seo@kaist.ac.kr,  dpenguin@kaist.ac.kr, jhyuk@kaist.ac.kr, changick@kaist.ac.kr}\\
$^{*}$Equal contribution , $^{\dagger}$Corresponding author
}

\maketitle

\vspace{-1.2em}
\begin{center}
    \includegraphics[width=\linewidth]{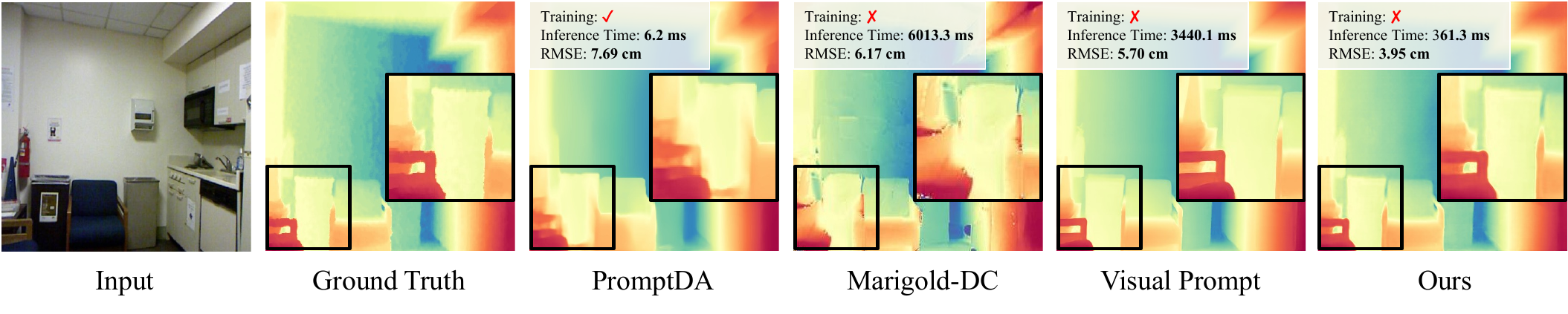}
    \captionof{figure}{We compare a training-based method (PromptDA~\cite{lin2025prompting}) with test-time optimization-based depth completion approaches~\cite{viola2025marigold, jeong2025test}. PromptDA requires sensor-specific training and achieves real-time inference, but suffers from large reconstruction error. Existing test-time optimization-based improve accuracy at the cost of several seconds of inference per image. In contrast, our method establishes a new \textbf{Pareto frontier} by simultaneously achieving the lowest error and highly efficient inference among test-time optimization-based depth completion methods.}
    \label{fig:teaser}
\end{center}
\vspace{-1.0em}

\begin{abstract}
Zero-shot depth completion has gained attention for its ability to generalize across environments without sensor-specific datasets or retraining.
However, most existing approaches rely on diffusion-based test-time optimization, which is computationally expensive due to iterative denoising.
Recent visual-prompt-based methods reduce training cost but still require repeated forward--backward passes through the full frozen network to optimize input-level prompts, resulting in slow inference.
In this work, we show that adapting only the decoder is sufficient for effective test-time optimization, as depth foundation models concentrate depth-relevant information within a low-dimensional decoder subspace.
Based on this insight, we propose a lightweight test-time adaptation method that
updates only this low-dimensional subspace using sparse depth supervision.
Our approach achieves state-of-the-art performance, establishing a new Pareto frontier between accuracy and efficiency for test-time adaptation.
Extensive experiments on five indoor and outdoor datasets demonstrate consistent improvements over prior methods, highlighting the practicality of fast zero-shot depth completion.
Code is available at \url{https://wonjun-lee1009.github.io/Eff-TTO-Depth-Completion/}.

  \keywords{Depth completion \and Test-time optimization \and Depth foundation model}
\end{abstract}

\section{Introduction}
\label{sec:intro}
Depth estimation has become important with the emergence of modern computer vision applications, including autonomous systems~\cite{geiger2013vision} and 3D reconstruction~\cite{wang2025vggt, leroy2024grounding}.
To obtain reliable depth predictions, information from various sensors, including RGB cameras, radar, LiDAR, and ultrasonic sensors, is commonly utilized.
In particular, depth sensors such as LiDAR can provide highly accurate depth measurements at high frequency.
\begin{figure}[!t]
    \centering
    \includegraphics[width=1.0\linewidth]{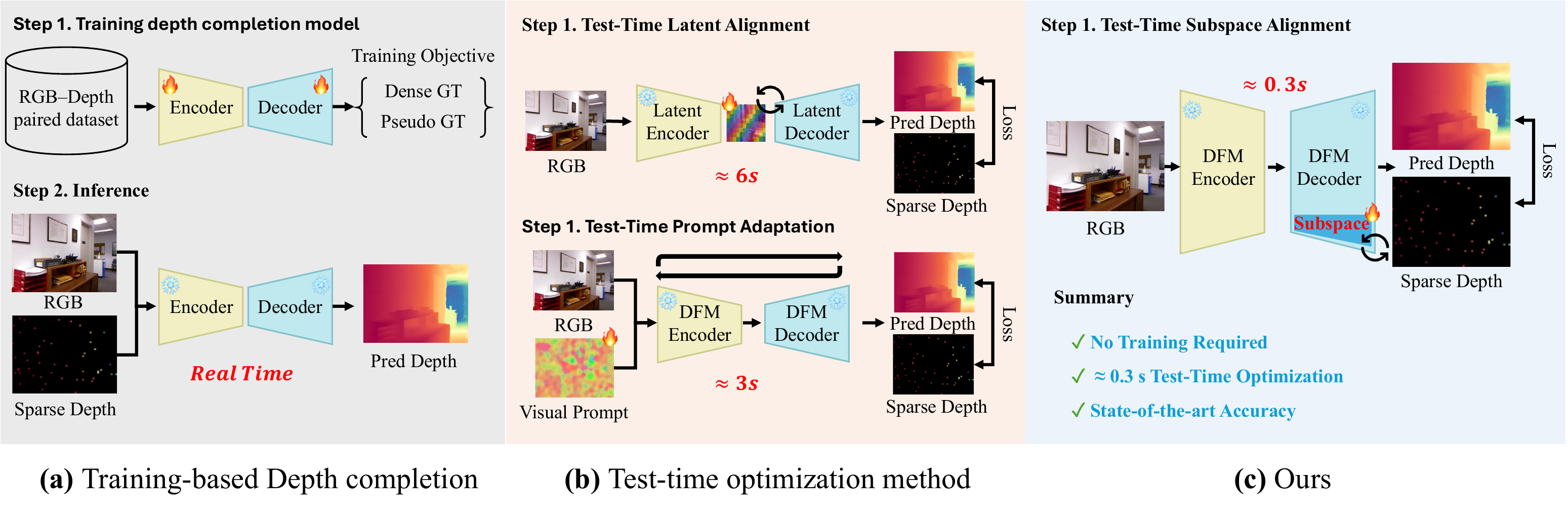}
    \caption{(a) Training-based depth completion relies on offline training with paired RGB–depth data. (b) Test-time optimization methods adapt either latent variables or visual prompts at inference time, incurring significant computational cost. (c) In contrast, our method adapts only the decoder low-dimensional subspace, which already encodes highly correlated depth structure, enabling efficient and fast test-time adaptation.
}
    \label{fig:related}
\end{figure}
However, due to hardware limitations the acquired depth measurements are often spatially sparse.
To overcome this limitation, a large body of research has focused on estimating dense depth maps from given sparse depth measurements, a problem commonly referred to as \textit{depth completion}.
Early depth completion methods~\cite{uhrig2017sparsity,chodosh2018deep} relied solely on sparse depth measurements to interpolate or reconstruct dense depth maps.
However, they often blur or distort depth boundaries.

In contrast, RGB sensors are relatively inexpensive and widely accessible, and RGB images contain rich color and texture cues.
As a result, many subsequent studies~\cite{park2020non, zhang2023completionformer, lin2025prompting, tang2024bilateral, zuo2024ogni, hyoseok2025zero} have leveraged RGB images as guidance to predict more accurate dense depth maps.
Following this direction, training-based depth completion methods that take both RGB images and sparse depth as input have become the dominant paradigm, as illustrated in ~\figref{fig:related}-(a).
Although training-based methods demonstrated significant performance improvements, they require collecting sensor- and environment-specific datasets and training models accordingly, which incurs substantial cost and time for real-world deployment.
Moreover, their generalization performance often degrades significantly in cross-sensor and cross-domain settings.

More recently, test-time optimization-based depth completion methods~\cite{viola2025marigold, jeong2025test, hyoseok2025zero, seo2025upsample} have been proposed to address these limitations, as illustrated in ~\figref{fig:related}-(b).
These approaches leverage depth foundation models such as Depth Anything (v1~\cite{yang2024depth}, v2~\cite{yang2024depthv2}, and v3~\cite{lin2025depth}), Marigold~\cite{ke2024repurposing}, and UniDepth~\cite{piccinelli2024unidepth,piccinelli2025unidepthv2} to generate an initial depth prediction from RGB input alone, and then optimize the output at test time using sparse depth as guidance.
Such methods operate robustly across cross-sensor and cross-domain scenarios without requiring additional data collection or training, greatly improving the practicality of depth completion.

However, existing test-time optimization-based methods rely on optimization that repeatedly forwards the entire network to refine the output.
As a result, they require several seconds of inference time per image. 
%
Consequently, prior work exhibits a clear trade-off between test-time optimization-based methods that are easy to deploy but slow, and training-based methods that are fast but require additional training.
To resolve this trade-off, we propose Efficient Test-Time Optimization for Depth Completion via Low-Rank Decoder Adaptation, as illustrated in ~\figref{fig:related}-(c).

Through a systematic analysis of encoder and decoder representations, we find that strongly depth-aligned signals concentrate in a low-dimensional subspace of the early decoder.
This observation naturally motivates exploring decoder-only adaptation as an efficient design choice for test-time optimization.
Based on this insight, we execute the encoder only once to extract and cache image features, and perform iterative adaptation solely through the decoder.
Experiments on five indoor and outdoor datasets demonstrate that our method achieves state-of-the-art performance while delivering the fastest test-time optimization among existing zero-shot depth completion methods.
\section{Related Work}
%
%
%
\subsection{Depth Foundation Models}
Monocular depth estimation predicts scene depth from a single RGB image and has long been studied in computer vision.
Early methods~\cite{eigen2014depth, laina2016deeper, hao2018detail, fu2018deep, hu2019revisiting, chen2019structure, lee2019big, yin2019enforcing, bhat2021adabins, bhat2022localbins, jun2022depth, yuan2022new} were mainly supervised on small paired RGB--depth datasets, improving via multi-scale designs, residual connections, and self-attention.
However, their performance degrades under domain shifts, often requiring sensor- and environment-specific data collection and retraining, which limits zero-shot deployment.

With advances in self-supervised learning~\cite{he2022masked, zhou2021ibot, caron2021emerging,baobeit} and large-scale pretraining~\cite{oquab2024dinov, simeoni2025dinov3}, depth foundation models have enabled more robust zero-shot depth estimation.
MiDaS~\cite{birkl2023midas} showed that large-scale self-supervised visual representations can support strong cross-domain relative depth.
Depth Anything~\cite{yang2024depth} further improved practicality through large-scale data curation and training, accelerating adoption.
In parallel, diffusion-based approaches such as Marigold~\cite{ke2024repurposing} and GeoWizard~\cite{fu2024geowizard} leverage pretrained generative priors to infer depth, but require iterative sampling at inference time and thus incur substantial overhead.

Recent models targeting metric depth, including Depth Anything V2 (Metric)~\cite{yang2024depthv2}, Metric3D~\cite{yin2023metric3d}, and Depth Pro~\cite{bochkovskii2024depth}, continue to improve accuracy, alongside broader efforts~\cite{piccinelli2025unidepthv2,ren2026anydepth}.
Nevertheless, current models remain fragile under severe domain shifts and often struggle to produce reliable metric depth; moreover, depth data rely on sensor-specific preprocessing and postprocessing pipelines, making cross-sensor generalization inherently difficult and motivating depth completion as a practical alternative.
\subsection{Depth Completion}
The dominant paradigm for depth completion is to predict a dense depth map by jointly leveraging an RGB image and sparse depth measurements.
When paired RGB--depth datasets are available, training-based depth completion is a standard and effective approach~\cite{park2020non, zhang2023completionformer, tang2024bilateral, zuo2024ogni, lin2025prompting}.
However, collecting such paired data is time-consuming and costly: for RGB+LiDAR setups, sensors must be precisely calibrated and dense ground truth is often built by aggregating multiple LiDAR scans, resulting in a complex acquisition and preprocessing pipeline.

To avoid this burden, zero-shot depth completion has emerged as an alternative~\cite{viola2025marigold, jeong2025test}.
These methods start from a pretrained depth foundation model prediction and refine it at test time using sparse depth as supervision, without additional training data.
Marigold-DC~\cite{viola2025marigold} performs guided diffusion by injecting sparse depth constraints via test-time optimization of latent variables (with affine scale--shift alignment), achieving strong cross-domain generalization but incurring substantial inference cost due to iterative sampling.

To reduce this cost, visual prompt tuning has been explored for zero-shot depth completion~\cite{jeong2025test}.
While it avoids retraining on paired datasets, it still requires repeated test-time optimization over the entire network, leaving a persistent efficiency--effectiveness trade-off.
\begin{figure}[!t]
    \centering
    \includegraphics[width=1.0\linewidth]{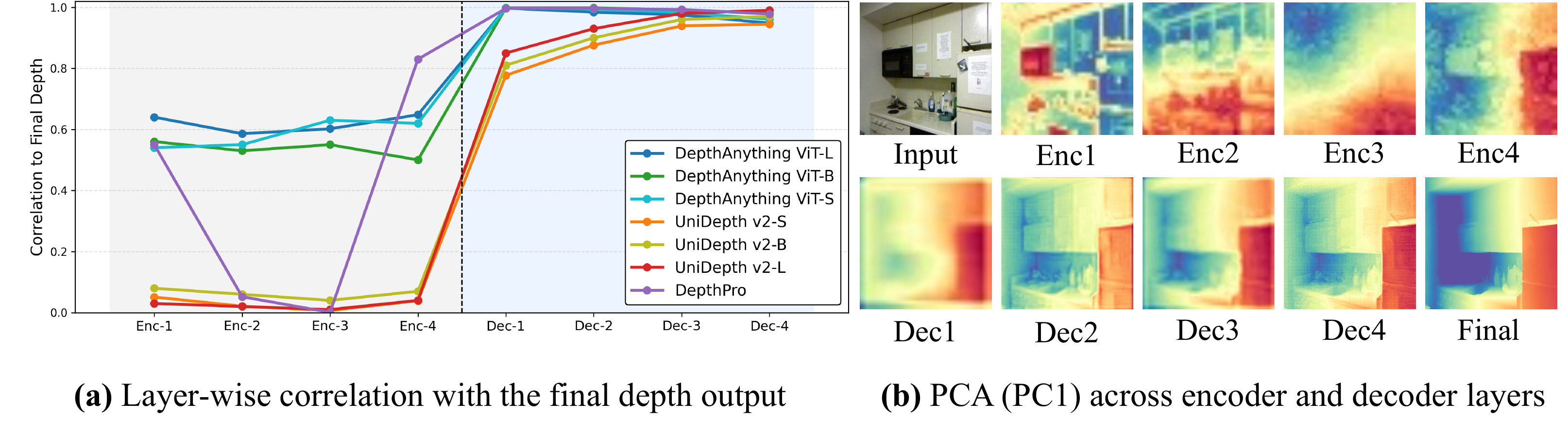}
    \caption{(a) Layer-wise correlation with the final depth output shows low correlation in the encoder and a sharp increase in the decoder. (b) PCA (PC1) visualizations indicate that decoder features already align closely with the final depth map, revealing strong depth information in a low-dimensional decoder subspace. }
    \label{fig:motivation}
\end{figure}
\section{Analysis of Decoder Representations}
The canonical pipeline of test-time optimization (TTO) for depth completion begins with an initial depth prediction from a pretrained depth foundation model (DFM) given an RGB input, which is then refined using sparse depth as supervision. However, existing TTO approaches require multiple iterations of full forward passes and parameter updates at inference time, introducing seconds-level latency per image and resulting in a “high utility but slow” trade-off.

We therefore ask a fundamental question: \emph{Where, and in what form, are the representations that most strongly influence the final depth prediction formed within a DFM?}
To answer this, we analyze several representative DFMs, including Depth Anything V2 (S/B/L)~\cite{yang2024depthv2}, UniDepthV2 (S/B/L)~\cite{piccinelli2025unidepthv2}, and Depth Pro~\cite{bochkovskii2024depth}. For each model, we extract intermediate feature maps from multiple encoder and decoder layers and conduct two complementary analyses.
First, we compute the \emph{layer-wise correlation} between each layer representation and the final depth output to identify where depth-aligned signals become amplified across the network (~\figref{fig:motivation}-(a)). Second, we apply PCA to each layer feature and visualize the first principal component (PC1) as a spatial map to examine whether depth structure is broadly distributed or concentrated within a dominant low-dimensional subspace (~\figref{fig:motivation}-(b)).
As shown in ~\figref{fig:motivation}, a consistent pattern emerges. The layer-wise correlation remains low throughout the encoder but increases sharply upon entering the decoder (~\figref{fig:motivation}-(a)). Moreover, the decoder PC1 exhibits strong spatial alignment with the final depth map (~\figref{fig:motivation}-(b)).

This observation should not be misinterpreted as implying that the encoder is unimportant. Changes in the encoder inevitably alter the decoder inputs, and their influence propagates to the final output. Nevertheless, from the perspective of TTO design, depth-aligned signals emerge most prominently in the decoder, particularly within its low-dimensional components.
These findings suggest that restricting adaptation to the decoder can provide a more \emph{controllable and stable}~\cite{niu2023towards} alternative to updating the entire model.
Crucially, the computational implications are also different.
Updating the encoder or the full network requires recomputing the entire forward path at each iteration.
In contrast, decoder-only adaptation allows encoder features to be extracted and cached once, after which optimization proceeds solely through the decoder, significantly reducing repeated computation.
\begin{figure}[!t]
    \centering
    \includegraphics[width=1.0\linewidth]{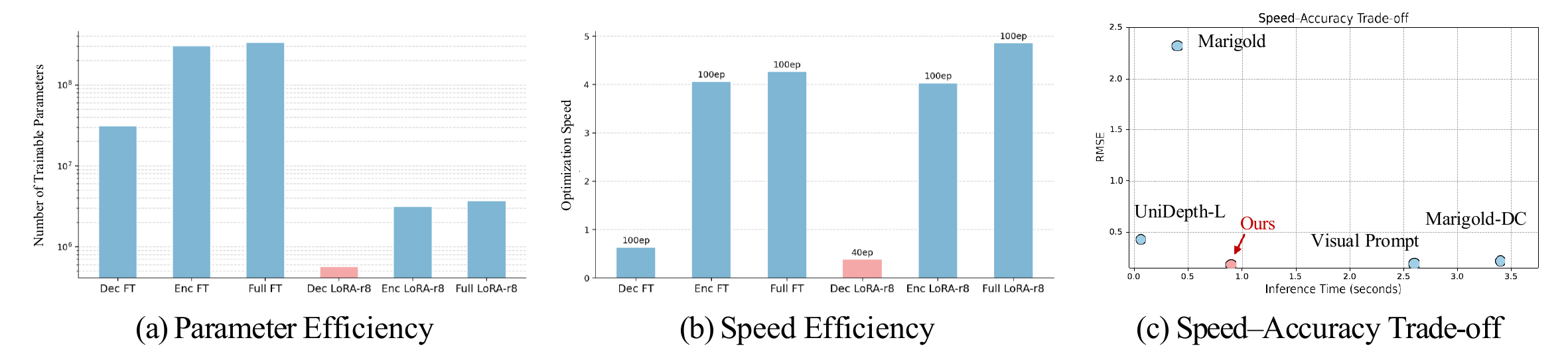}
    \caption{Efficiency and performance comparison of test-time adaptation strategies. Decoder-only LoRA minimizes trainable parameters and adaptation time, while achieving a favorable speed--accuracy trade-off.}
    \label{fig:a2}
\end{figure}

~\figref{fig:a2}-(a,b) empirically validate these findings. They compare parameter efficiency and adaptation speed when applying TTO to the encoder, decoder, and the full network. All comparisons are conducted at the best performing stable convergence epoch of each method to ensure fair evaluation. In addition, ~\figref{fig:a2}-(c) presents a comparison in terms of accuracy and inference time against existing zero-shot depth completion methods, showing that our approach establishes a new Pareto frontier by jointly optimizing performance and efficiency.
\section{Method}
%
%
%
%
%
\subsection{Problem Definition}
Let $I \in \mathbb{R}^{H \times W \times 3}$ denote an RGB image, and
$S \in \mathbb{R}^{H \times W}$ a sparse depth map with valid measurements only on a subset of pixels $\Omega \subseteq \{1,\dots,H\}\times\{1,\dots,W\}$.
Let $D \in \mathbb{R}^{H \times W}$ denote the dense ground-truth depth, and $F_{\theta}$ be a depth completion model parameterized by $\theta$.
The goal of depth completion is to predict a dense depth map
\begin{equation}
\hat{D} = F_{\theta}(I, S),
\end{equation}
where $\hat{D} \in \mathbb{R}^{H \times W}$ denotes the predicted depth.

\noindent\textbf{Training-based setting.}
In the training-based setting, we assume access to a paired dataset
$\{(I, S, D)\}$.
The model parameters $\theta$ are learned by minimizing a supervised loss
between the predicted depth and the dense ground truth:
\begin{equation}
\min_{\theta}\; \mathbb{E}\big[\mathcal{L}\big(F_{\theta}(I,S),\, D\big)\big].
\end{equation}
However, constructing paired RGB--sparse--dense datasets is often
time-consuming and costly, which limits their applicability in practice.

\noindent\textbf{Test-Time Optimization (TTO).}
When dense ground-truth depth is unavailable, test-time optimization (TTO) adapts the model at inference time using sparse depth solely as a supervision signal, rather than as an input to the model.
Given a test sample $(I,S)$, TTO updates the model parameters by minimizing a sparse consistency loss over the observed pixel set $\Omega$:
\begin{equation}
\theta^{\prime} = \arg\min_{\theta}\;
\mathcal{L}_{\Omega}\big(F_{\theta}(I),\, S\big),
\end{equation}
where $\mathcal{L}_{\Omega}$ enforces consistency between the predicted dense
depth and the sparse measurements only at the observed locations.
\subsection{Decoder-Only Test-Time Optimization}
We decompose a pretrained depth foundation model (DFM) $F$ into
an encoder $e(\cdot)$ and a decoder $d(\cdot)$.
Given an RGB image $I$, the encoder is executed only once to extract
a feature map $f = e(I)$.
During test-time optimization, this feature representation $f$ is kept fixed, and all subsequent adaptation is restricted to the decoder.
The overall procedure of decoder-only test-time optimization is summarized in Algorithm~\ref{alg:tto_simple}.

\begin{algorithm}[t]
\caption{Decoder-Only Test-Time Optimization}
\label{alg:tto_simple}

{\scriptsize
\SetAlFnt{\scriptsize}
\SetAlCapFnt{\scriptsize}
\SetAlgoNlRelativeSize{-1}
\setlength{\algomargin}{0.6em}
\DontPrintSemicolon

\KwIn{$I$ (RGB), $S$ (sparse depth), $e,d$ (encoder, decoder)}
\KwOut{$\hat{D}_{\text{aligned}}$}

$f \leftarrow e(I)$ \tcp*[r]{encode}

\For{$t=1$ \KwTo $T$}{
    $\hat{D} \leftarrow d(f)$ \tcp*[r]{decode}

    $(a,b) \leftarrow \arg\min_{a,b}
    \sum_{(i,j)\in\Omega}(a\hat{D}_{ij}+b-S_{ij})^2$
    \tcp*[r]{align}

    $\hat{D}_{\text{aligned}} \leftarrow a\hat{D}+b$

    $\mathcal{L} \leftarrow
    \sum_{(i,j)\in\Omega}\|\hat{D}_{\text{aligned},ij}-S_{ij}\|_2^2$

    $\theta_d \leftarrow \theta_d - \eta \nabla_{\theta_d}\mathcal{L}$
    \tcp*[r]{decoder update}
}

\Return $\hat{D}_{\text{aligned}}$
}
\end{algorithm}

\noindent\textbf{Low-Dimensional Decoder Adaptation.}
As observed in Section~3, the low-dimensional representations of the decoder already exhibit strong structural correlation with the final depth prediction.
Motivated by this observation, we avoid updating the entire decoder and instead restrict adaptation to its low-dimensional components.
Given the fixed feature map $f$, the decoder produces an initial depth prediction $\hat{D} = d(f)$.

Sparse depth $S$ is not used as an input to the model, but solely as a supervision signal at test time. However, sparse depth measurements are subject to sensor-specific biases and preprocessing variations, resulting in uncertain absolute scale and offset. Therefore, directly comparing predicted depth with sparse measurements is inappropriate.

\noindent\textbf{Sparse Depth Alignment via Scale--Shift Estimation.}
To address this issue, we assume a linear scale--shift relationship between the predicted depth and sparse depth over the set of observed pixels $\Omega$:
\begin{equation}
S_{ij} \approx a\,\hat{D}_{ij} + b, \quad (i,j) \in \Omega,
\end{equation}
where $a$ and $b$ denote the scale and shift parameters, respectively.
These parameters are estimated by minimizing the squared error over $\Omega$
using a least-squares formulation:
\begin{equation}
(a,b) = \arg\min_{a,b}
\sum_{(i,j)\in\Omega}
\big( a\,\hat{D}_{ij} + b - S_{ij} \big)^2.
\end{equation}

This alignment step removes the ambiguity in absolute scale inherent to sparse
depth measurements and allows decoder adaptation to focus on structural
consistency rather than scale matching.

\noindent\textbf{Test-Time Optimization Objective.}
Using the estimated scale--shift parameters, we define the aligned depth
prediction as
\[
\hat{D}_{\text{aligned}} = a\,\hat{D} + b.
\]
Test-time optimization then updates only the low-dimensional decoder parameters
by minimizing the alignment error at sparse depth locations:
\begin{equation}
\min_{\theta_d}
\sum_{(i,j)\in\Omega}
\big\|
\hat{D}_{\text{aligned},ij} - S_{ij}
\big\|_2^2.
\end{equation}

\section{Experiments}
In this section, we primarily compare our method with existing zero-shot depth completion approaches.
The experimental protocol follows the evaluation setup adopted in recent zero-shot depth completion literature to ensure fair comparison~\cite{viola2025marigold, jeong2025test}.
Although our method is model-agnostic and consistently achieves stronger performance when applied to DepthAnything v2 across all benchmarks, we adopt UniDepth v2 as the primary backbone for most quantitative evaluations.
This choice is made to strictly follow the protocol of~\cite{jeong2025test}, where UniDepth v2 is used as the standard backbone in zero-shot depth completion experiments, thereby enabling direct and fair comparison with prior work.
\subsection{Experimental Settings}
\subsubsection{Evaluation Datasets.}
We evaluate our method on five widely used real-world datasets. These datasets collectively cover both indoor and outdoor environments, exhibit diverse image resolutions, and present varying levels of depth sparsity.
Such diversity enables a comprehensive assessment of generalization across heterogeneous sensing conditions and sparse input settings.

\begin{itemize}
\item \textbf{IBims-1~\cite{koch2018evaluation}}
Indoor dataset with 100 RGB–depth pairs at $640\times480$. Since sparse depth is not provided, we randomly sample 1,000 points from the dense ground truth following prior protocols~\cite{jeong2025test, viola2025marigold}.

\item \textbf{VOID~\cite{wong2020unsupervised}}
Indoor and outdoor RGB–depth dataset with 800 samples at $640\times480$. We use 1,500 sparse depth points per image following standard evaluation settings~\cite{jeong2025test, viola2025marigold}.

\item \textbf{NYUv2~\cite{silberman2012indoor}}
Indoor RGB–depth dataset using the official 654-sample test split. Images are resized to $304\times228$, and sparse depth is constructed by randomly sampling 500 points from the dense depth~\cite{eigen2014depth, laina2016deeper, ma2018sparse, conti2023sparsity, viola2025marigold}.

\item \textbf{KITTI Depth Completion (KITTI-DC)~\cite{uhrig2017sparsity}}
Outdoor driving dataset with RGB images and sparse LiDAR depth at $1216\times352$. We evaluate on 1,000 validation samples after standard outlier filtering~\cite{laina2016deeper, viola2025marigold}.

\item \textbf{DDAD~\cite{guizilini20203d}}
Outdoor driving dataset with RGB–depth pairs at $1216\times1936$. We use the front-facing camera and apply standard sparse depth outlier filtering before evaluation~\cite{laina2016deeper, viola2025marigold}.

\end{itemize}

\subsubsection{Evaluation Metrics.}
Following recent zero-shot depth completion work~\cite{viola2025marigold, jeong2025test}, we report Mean Absolute Error (MAE) and Root Mean Squared Error (RMSE) for quantitative evaluation.

\subsubsection{Implementation Details.}
All experiments are conducted on a single NVIDIA H200 GPU with 140GB memory.
For test-time adaptation, we adopt LoRA to enable low-dimensional tuning of the decoder while keeping the encoder fully frozen.
The number of optimization iterations is set to 40, and the learning rate is fixed to 0.01.
A LoRA rank of $r=8$ is used in all experiments. All hyperparameters are kept identical across datasets, without any dataset-specific tuning.

For the main quantitative results and all ablation studies, we use UniDepthV2-L as the backbone, following the setting of TestPromptDC~\cite{jeong2025test}, which applies input-level visual prompt tuning and represents the current state-of-the-art baseline.
All experiments are reproduced strictly following the official code released by the original authors to ensure fairness and consistency.
\begin{table}[t!]
  \centering
  \caption{Comparison of training-based methods, depth foundation models, and zero-shot depth completion approaches. Zero-shot methods adapt pretrained depth foundation models at test time using sparse depth supervision without retraining. Results shown in \textcolor{gray}{gray} are taken from Marigold-DC, while cells highlighted in \textcolor{green}{green} indicate the best performance.
  }
  \label{tab:main_exp}
  \scalebox{0.65}{
  \begin{tabular}{
  @{}l 
  c@{\hspace{0.5em}}c @{}p{1.5em}@{} 
  c@{\hspace{0.5em}}c @{}p{1.5em}@{} 
  c@{\hspace{0.5em}}c @{}p{1.5em}@{} 
  c@{\hspace{0.5em}}c @{}p{1.5em}@{} 
  c@{\hspace{0.5em}}c @{}}
    \toprule
\multirow{2}{*}{\textbf{Method}} 
  & \multicolumn{2}{c}{\textbf{IBims-1}} 
  & & \multicolumn{2}{c}{\textbf{VOID}} 
  & & \multicolumn{2}{c}{\textbf{NYUv2}}
  & & \multicolumn{2}{c}{\textbf{KITTI DC}}
  & & \multicolumn{2}{c}{\textbf{DDAD}} \\
\cmidrule{2-3} \cmidrule{5-6} \cmidrule{8-9} \cmidrule{11-12} \cmidrule{14-15}
 & MAE↓ & RMSE↓ & & MAE↓ & RMSE↓ & & MAE↓ & RMSE↓ & & MAE↓ & RMSE↓ & & MAE↓ & RMSE↓\\  

\midrule
\multicolumn{15}{l}{\textbf{Training-based Depth Completion}} \\
\rowcolor{gray!8}
NLSPN~\cite{park2020non} \textcolor{gray}{\scriptsize (ECCV '20)} & 0.049 & 0.191 & & 0.210 & 0.668 & & 0.440 & 0.716 & & 1.335 & 2.076 & & 2.498 & 9.231\\

\rowcolor{gray!8}
CompletionFormer~\cite{zhang2023completionformer} \textcolor{gray}{\scriptsize (CVPR '23)} & 0.058 & 0.206 & & 0.261& 0.726 & & 0.186 & 0.374 & & 0.952 & 1.935 & & 2.518 & 9.471\\

\rowcolor{gray!8}
VPP4DC~\cite{bartolomei2024revisiting} \textcolor{gray}{\scriptsize (3DV '24)} & 0.062 & 0.228 & & 0.148 & 0.543 & & 0.077 & 0.247 & & 0.413& 1.609 & & 1.344 & 6.781\\

\rowcolor{gray!8}
DepthLab~\cite{liu2024depthlab} \textcolor{gray}{\scriptsize (arXiv '24)} & 0.098  & 0.198 & & 0.214 &0.602 & & 0.184 &0.276 & & 0.921 &  2.171 & & 4.498 &8.379\\

PromptDA ~\cite{lin2025prompting} \textcolor{gray}{\scriptsize (CVPR '25)} & 0.076 & 0.207 & & 0.133 & 0.366 & & \underline{0.104} & 0.227 & & 1.002 & 3.187 & & \cellcolor{green!15}\textbf{0.998} & \underline{3.354}\\

PromptDA (scale shift)~\cite{lin2025prompting} \textcolor{gray}{\scriptsize (CVPR '25)} & 0.071 & 0.201 & & 0.132 & 0.357 & & 0.107 & 0.222 & & 1.041 & 2.863 & & \underline{1.125} & \cellcolor{green!15}\textbf{3.294}\\

\midrule
\multicolumn{15}{l}{\textbf{Depth Foundation Models}} \\
Marigold~\cite{ke2024repurposing} \textcolor{gray}{\scriptsize (CVPR '24)} & 2.785 &	2.972 & & 1.294 & 1.523 & & 2.315 &	2.466 & & 15.450 & 19.093 & & 23.640 & 28.456\\
Marigold (scale shift)~\cite{ke2024repurposing} \textcolor{gray}{\scriptsize (CVPR '24)} & 0.166 & 0.287 & & 0.200 & 0.411 & & 0.374 & 0.536 & & 1.863 & 3.602 & & 5.487 & 8.136 \\
UnidepthV2~\cite{piccinelli2025unidepthv2} \textcolor{gray}{\scriptsize (CVPR '25)}
& 0.297 & 0.377  
&& 0.196 & 0.380 
&& 0.303 & 0.422 
&& 1.502 & 3.191 
&& 2.933 & 5.362 \\ 

UnidepthV2 (scale shift)~\cite{piccinelli2025unidepthv2} \textcolor{gray}{\scriptsize (CVPR '25)}
& 0.097 & 0.203  
&& 0.141 & 0.343 
&& 0.144 & 0.250 
&& 1.395 & 2.837 
&& 4.312 & 6.411 \\ 

\midrule
\multicolumn{15}{l}{\textbf{Zero-shot Depth Completion}} \\
Marigold-DC~\cite{viola2025marigold} \textcolor{gray}{\scriptsize (ICCV '25)} & 0.061 & 0.189 & & 0.106 & 0.322 & & \cellcolor{green!15} \textbf{0.093} & 0.216 & & 0.568 &	1.663 & & 1.795	 &
4.141\\
UnidepthV2+TestPromptDC~\cite{jeong2025test} \textcolor{gray}{\scriptsize (ICCV '25)} & 0.062 & 0.201   
&& 0.103 & 0.306  
&& 0.111 & 0.186  
&& 0.548 & 1.793  
&& 4.263 & 6.433 \\ 
UnidepthV2+Ours
& \cellcolor{green!15}\textbf{0.041} & \cellcolor{green!15}\textbf{0.138}   
&& \cellcolor{green!15}\textbf{0.086} & \cellcolor{green!15}\textbf{0.271}  
&& 0.109 & \cellcolor{green!15}\textbf{0.182}  
&& \cellcolor{green!15}\textbf{0.515} & \cellcolor{green!15}\textbf{1.476} 
&& 2.012 & 4.160 \\ 
    \bottomrule
  \end{tabular}
}

\end{table}

\subsection{Main Results}
\noindent\textbf{Comparison with Other Methods}
~\tabref{tab:main_exp} compares training-based depth completion methods, depth foundation models (DFMs), and zero-shot depth completion approaches.
Directly applying DFMs to sparse depth inputs leads to degraded performance, as they struggle to recover accurate metric depth despite strong monocular priors.
In contrast, incorporating sparse depth as test-time supervision consistently reduces errors across zero-shot methods, indicating that DFMs already encode rich geometric structure.
The remaining challenge is therefore not representational capacity, but scale ambiguity and sensor-specific mismatch arising from hardware, calibration, and preprocessing differences.
Our method directly addresses this issue by selectively adapting a low-dimensional decoder subspace that is strongly correlated with the final depth prediction.
This targeted adaptation effectively resolves scale inconsistency and structural misalignment, achieving state-of-the-art performance on most datasets.

\noindent\textbf{Inference Speed Comparison}
~\tabref{tab:speed_compare} reports the inference time (in seconds) across different input resolutions (VGA, SD, HD, and FHD).
Since depth foundation models (DFMs) do not involve test-time optimization, they exhibit low inference latency.
In contrast, test-time optimization methods such as Marigold-DC and visual prompting incur substantial overhead, already requiring 8--9 seconds at VGA resolution.
Our method achieves significantly lower latency among TTO approaches, running in approximately 2 seconds at VGA.
As resolution increases, Marigold-DC scales poorly (up to 42 seconds at FHD), while visual prompting fails due to out-of-memory errors.
In comparison, our method remains feasible at FHD with an inference time of 15 seconds.
Overall, our approach is the fastest among test-time optimization methods, while also achieving the best depth completion performance across all evaluated models.

\begin{table}[t]
\centering
\caption{Comparison of inference time (seconds) with respect to input image size. The inference speed of DFMs is reported for reference only, as they do not involve test-time optimization. Our primary comparison focuses on test-time optimization methods.}
\label{tab:speed_compare}
\footnotesize
\setlength{\tabcolsep}{4pt}
\renewcommand{\arraystretch}{1.15}

\resizebox{\linewidth}{!}{
\begin{tabular}{c cc ccc}
\toprule
\multirow{2}{*}{\textbf{Resolution}} 
& \multicolumn{2}{c}{\textbf{Depth Foundation Model}} 
& \multicolumn{3}{c}{\textbf{Test-time Optimization}} \\

\cmidrule(lr){2-3} \cmidrule(lr){4-6}

& Marigold 
& UnidepthV2
& Marigold-DC 
& UnidepthV2+TestPromptDC 
& UnidepthV2+Ours \\

\midrule
VGA & 0.49 & 0.04 & 9.74 & 7.27 & \cellcolor{green!15}\textbf{0.54} \\
SD & 0.56 & 0.04 & 10.47 & 8.19 & \cellcolor{green!15}\textbf{0.61} \\
HD (720p) & 1.43 & 0.05 & 18.47 & 26.79 & \cellcolor{green!15}\textbf{1.24} \\
FHD (1080p) & 4.20 & 0.05 & 42.82 & 79.35 & \cellcolor{green!15}\textbf{2.53} \\

\bottomrule
\end{tabular}
}
\end{table}

\subsection{Ablation Study} %
\begin{table}[t!]
  \centering
  \caption{Ablation study on robustness to sparse depth supervision. We evaluate zero-shot depth completion methods across different sparsity regimes on NYUv2 and varying LiDAR channel densities on KITTI-DC.}
  \label{tab:abl_sparse}
  \scalebox{0.58}{
  \begin{tabular}{
  @{}l
  c@{\hspace{0.5em}}c @{}p{1.em}@{}
  c@{\hspace{0.5em}}c @{}p{1.em}@{}
  c@{\hspace{0.5em}}c @{}p{1.em}@{}
  c@{\hspace{0.5em}}c @{}p{1.em}@{}
  c@{\hspace{0.5em}}c @{}p{1.em}@{}
  c@{\hspace{0.5em}}c @{}p{1.em}@{}
  c@{\hspace{0.5em}}c
  @{}}
    \toprule
\multirow{2}{*}{\textbf{Method}}
  & \multicolumn{2}{c}{\makecell{\textbf{NYUv2}\\100 samples}}
  & & \multicolumn{2}{c}{\makecell{\textbf{NYUv2}\\50 samples}}
  & & \multicolumn{2}{c}{\makecell{\textbf{NYUv2}\\5 samples}}
  & & \multicolumn{2}{c}{\makecell{\textbf{KITTI-DC}\\32 channels}}
  & & \multicolumn{2}{c}{\makecell{\textbf{KITTI-DC}\\16 channels}}
  & & \multicolumn{2}{c}{\makecell{\textbf{KITTI-DC}\\8 channels}}
  & & \multicolumn{2}{c}{\makecell{\textbf{KITTI-DC}\\4 channels}} \\
\cmidrule{2-3} \cmidrule{5-6} \cmidrule{8-9}
\cmidrule{11-12} \cmidrule{14-15} \cmidrule{17-18} \cmidrule{20-21}
 & MAE↓ & RMSE↓ & & MAE↓ & RMSE↓ & & MAE↓ & RMSE↓
 & & MAE↓ & RMSE↓ & & MAE↓ & RMSE↓ & & MAE↓ & RMSE↓ & & MAE↓ & RMSE↓\\
 \midrule

Marigold-DC
& 0.154 & 0.308 && 0.192 & 0.354 && 0.426 & 0.594
&& 0.621 & 1.875 && 0.739 & 2.199 && 0.924 & 2.586 && 1.182 & 3.014 \\

PromptDA
& 0.148 & 0.288 && 0.192 & 0.341 && 0.610 & 0.848 && 1.069 & 3.377 && 1.223 & 1.223 && 1.527 & 4.475 && 1.861 & 4.642 \\

PromptDA (scale shift)
& 0.152 & 0.280 && 0.194 & 0.333 && 0.610 & 0.848 && 1.096 & 3.030 && 1.227 & 3.396 && 1.475 & 3.999 && 1.816 & 4.143 \\

TestPromptDC
& \cellcolor{green!15}\textbf{0.101} & 0.217 && \cellcolor{green!15}\textbf{0.115} & 0.235 && 0.553 & 0.658
&& 0.562 & 1.866 && 0.605 & 2.044 && 0.701 & 2.327 && \cellcolor{green!15}\textbf{0.814} & 2.483 \\ \midrule

Ours
& 0.119 & \cellcolor{green!15}\textbf{0.205} && 0.127 & \cellcolor{green!15}\textbf{0.219} && \cellcolor{green!15}\textbf{0.233} & \cellcolor{green!15}\textbf{0.343}
&& \cellcolor{green!15}\textbf{0.541} & \cellcolor{green!15}\textbf{1.592} && \cellcolor{green!15}\textbf{0.591} & \cellcolor{green!15}\textbf{1.776} && \cellcolor{green!15}\textbf{0.684} & \cellcolor{green!15}\textbf{2.049} && 0.828 & \cellcolor{green!15}\textbf{2.347} \\

\bottomrule
  \end{tabular}
  }
\end{table}

\noindent\textbf{Robustness to Depth Sparsity}~\tabref{tab:abl_sparse} presents the generalization performance of zero-shot depth completion methods with respect to the number of available sparse depth samples.
To evaluate robustness across different environments, we conduct experiments on both an indoor dataset (NYUv2) and an outdoor dataset (KITTI-DC).

Our method exhibits a similar performance trend to other zero-shot depth completion approaches as the number of depth samples increases. Specifically, performance consistently improves with more available depth measurements, indicating that all methods benefit from denser supervision.
Importantly, across all sparsity regimes and in both indoor and outdoor settings, our method consistently outperforms competing zero-shot depth completion methods. This demonstrates that the proposed decoder-only adaptation strategy remains effective and robust regardless of the sparsity level of the input depth measurements or the underlying scene domain.

\begin{table}[t]
\centering
\small
\caption{Backbone generalization across model families. We evaluate our method on UniDepthV2 and DepthAnythingV2 backbones with S/B/L model sizes.}
\label{tab:backbone_split}

\begin{minipage}[t]{0.48\linewidth}
\centering
\textbf{(a) UniDepthV2}

\vspace{0.3em}

\resizebox{\linewidth}{!}{
\begin{tabular}{c l cc cc}
\toprule
\multicolumn{1}{c}{\multirow{2}{*}{\textbf{Size}}}
& \multicolumn{1}{c}{\multirow{2}{*}{\textbf{Method}}}
& \multicolumn{2}{c}{NYUv2}
& \multicolumn{2}{c}{KITTI} \\
\cmidrule(lr){3-4} \cmidrule(lr){5-6}
& & MAE↓ & RMSE↓ & MAE↓ & RMSE↓ \\
\midrule

\multirow{4}{*}{S}
& Base & 0.434 & 0.537 & 2.651 & 4.261 \\
& Base (scale shift) & 0.172 & 0.275 & 1.400 & 2.988 \\
& TestProptDC & \cellcolor{green!15}\textbf{0.081} & \cellcolor{green!15}\textbf{0.179} & \cellcolor{green!15}\textbf{0.579} & 1.927 \\
& Ours & \underline{0.135} & \underline{0.213} & \underline{0.646} & \cellcolor{green!15}\textbf{1.699} \\
\cmidrule(lr){1-6}

\multirow{4}{*}{B}
& Base & 0.382 & 0.490 & 1.387 & 2.994 \\
& Base (scale shift) & 0.155 & 0.260 & 1.289 & 2.811 \\
& TestPromptDC & \cellcolor{green!15}\textbf{0.083} & 0.187 & 0.636 & 2.146 \\
& Ours & \underline{0.109} & \cellcolor{green!15}\textbf{0.185} & \cellcolor{green!15}\textbf{0.581} & \cellcolor{green!15}\textbf{1.518} \\
\cmidrule(lr){1-6}

\multirow{4}{*}{L}
& Base & 0.303 & 0.422 & 1.502 & 3.191 \\
& Base (scale shift) & 0.143 & 0.250 & 1.009 & 2.474 \\
& TestPromptDC & 0.111 & 0.186 & 0.548 & 1.793 \\
& Ours & \cellcolor{green!15}\textbf{0.109} & \cellcolor{green!15}\textbf{0.182} & \cellcolor{green!15}\textbf{0.515} & \cellcolor{green!15}\textbf{1.476} \\

\bottomrule
\end{tabular}}
\end{minipage}
\hfill
\begin{minipage}[t]{0.48\linewidth}
\centering
\textbf{(b) DepthAnythingV2}

\vspace{0.3em}

\resizebox{\linewidth}{!}{
\begin{tabular}{c l cc cc}
\toprule
\multicolumn{1}{c}{\multirow{2}{*}{\textbf{Size}}}
& \multicolumn{1}{c}{\multirow{2}{*}{\textbf{Method}}}
& \multicolumn{2}{c}{NYUv2}
& \multicolumn{2}{c}{KITTI} \\
\cmidrule(lr){3-4} \cmidrule(lr){5-6}
& & MAE↓ & RMSE↓ & MAE↓ & RMSE↓ \\
\midrule

\multirow{4}{*}{S}
& Base & 1.371 & 1.471 & 2.179 & 4.157 \\
& Base (scale shift) & 0.246 & 0.358 & 1.960 & 3.822 \\
& TestPromptDC & 0.091 & 0.186 & \cellcolor{green!15}\textbf{0.527} & 1.614 \\
& Ours & \cellcolor{green!15}\textbf{0.067} & \cellcolor{green!15}\textbf{0.133} & \underline{0.558} & \cellcolor{green!15}\textbf{1.484} \\
\cmidrule(lr){1-6}

\multirow{4}{*}{B}
& Base & 1.183 & 1.286 & 2.147 & 3.929 \\
& Base (scale shift) & 0.219 & 0.325 & 1.806 & 3.531 \\
& TestPromptDC & 0.090 & 0.186 & 0.499 & 1.584 \\
& Ours & \cellcolor{green!15}\textbf{0.060} & \cellcolor{green!15}\textbf{0.126} & \cellcolor{green!15}\textbf{0.469} & \cellcolor{green!15}\textbf{1.410} \\
\cmidrule(lr){1-6}

\multirow{4}{*}{L}
& Base & 1.160 & 1.258 & 2.301 & 3.980 \\
& Base (scale shift) & 0.205 & 0.310 & 1.701 & 3.293 \\
& TestPromptDC & 0.094 & 0.193 & 0.504 & 1.580 \\
& Ours & \cellcolor{green!15}\textbf{0.058} & \cellcolor{green!15}\textbf{0.124} & \cellcolor{green!15}\textbf{0.433} & \cellcolor{green!15}\textbf{1.389} \\

\bottomrule
\end{tabular}}
\end{minipage}

\end{table}

\noindent\textbf{Backbone Generalization}~
~\tabref{tab:backbone_split} reports generalization performance across different depth foundation model backbones and model sizes.
We evaluate our method on UniDepthV2 and DepthAnythingV2 with S/B/L variants.
Across both backbone families, our method consistently improves performance over the corresponding base models, despite differences in architecture and training protocols.
Notably, the performance gains become more pronounced as model size increases.
For DepthAnythingV2, our method outperforms the baseline across all S/B/L variants, with larger backbones exhibiting stronger improvements.
This suggests that larger depth foundation models encode richer depth-aligned structure in the decoder, which can be more effectively exploited by our low-dimensional adaptation strategy.

An exception is observed for UniDepthV2-S, where visual prompt tuning slightly outperforms our method, likely due to the limited representational capacity of smaller backbones.
Nevertheless, as model capacity increases, our decoder-only adaptation becomes increasingly effective and consistently surpasses alternative test-time optimization strategies.
Finally, we note that our approach is designed for estimation-based depth foundation models and is not directly applicable to generation-based methods such as diffusion-based depth generation.

\begin{figure}[t!]
    \centering
    \includegraphics[width=\linewidth]{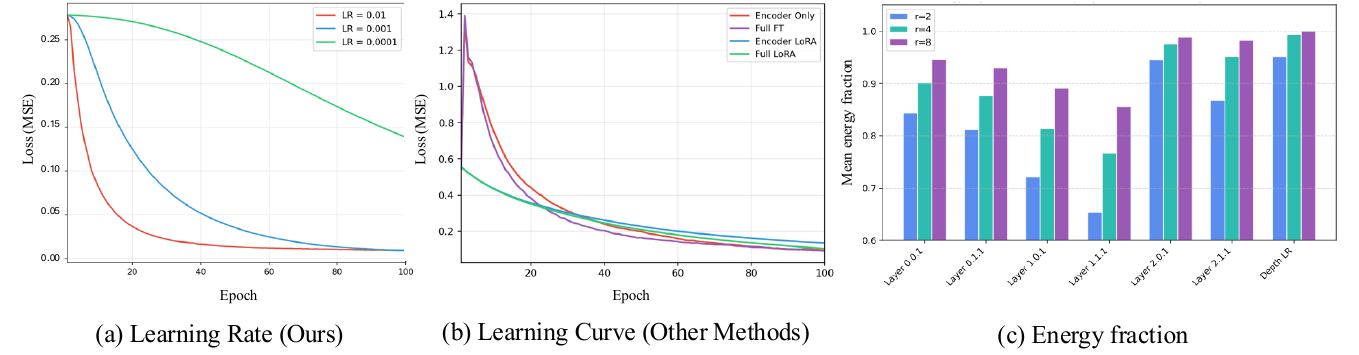}
    \caption{
    Energy fraction captured by low-rank components of decoder weight updates.
    Most layers exhibit strongly low-rank structures, where rank $r=8$ explains over 90\% of the total energy.
    }
    \label{fig:svd_energy}
\end{figure}

\begin{table}[t]
\centering
\scriptsize
\setlength{\tabcolsep}{3pt}
\caption{Ablation study on adaptation strategy and LoRA rank on NYUv2.}
\label{tab:abl_main}

\begin{minipage}[t]{0.48\linewidth}
\centering
\begin{tabular}{lccc}
\toprule
\multicolumn{4}{c}{\textbf{(a) Adaptation Scope}} \\
\midrule
Method & MAE↓ & RMSE↓ & Total Time↓ \\
\midrule
Encoder-only FT      & 0.162 & 0.268 & 7651s \\
Decoder-only FT      & 0.129 & 0.221 & 2406s \\
Full FT (Enc+Dec)    & 0.148 & 0.247 & 7972s\\
\midrule
Encoder-only LoRA    & 0.112 & 0.193 & 6428s \\
Decoder-only LoRA    & \cellcolor{green!15}\textbf{0.109} & \cellcolor{green!15}\textbf{0.181} & \cellcolor{green!15}\textbf{1074s} \\
Full LoRA (Enc+Dec)  & 0.112 & 0.197 & 7084s \\
\bottomrule
\end{tabular}
\end{minipage}
\hfill
\begin{minipage}[t]{0.48\linewidth}
\centering
\begin{tabular}{ccc}
\toprule
\multicolumn{3}{c}{\textbf{(b) LoRA Rank Sensitivity}} \\
\midrule
Rank $r$ & MAE↓ & RMSE↓ \\
\midrule
2  & 0.126 & 0.213 \\
4  & 0.111 & 0.186 \\
8  & \cellcolor{green!15}\textbf{0.109} & \cellcolor{green!15}\textbf{0.181} \\
16 & 0.115 & 0.195 \\
32 & 0.121 & 0.204 \\
\bottomrule
\end{tabular}
\end{minipage}

\end{table}

\noindent\textbf{Learning Rate and Epoch Analysis}
We conduct experiments on NYUv2 to identify an appropriate learning rate for test-time adaptation. Specifically, we compare learning rates of 0.01, 0.001, and 0.0001.
As shown in ~\figref{fig:svd_energy}-(a), our method converges rapidly with a learning rate of 0.01 and reaches stable performance around 40 epochs.
Based on this observation, we fix the number of adaptation epochs to 40 in all main experiments. In contrast, lower learning rates require substantially more iterations to converge, without providing additional performance improvements.

For a fair comparison, we evaluate encoder-only fine-tuning, full fine-tuning, encoder-only LoRA, and full LoRA under the same epoch setting. However, as illustrated in ~\figref{fig:svd_energy}-(b), these alternative strategies require up to 100 epochs to reach stable convergence.
Moreover, they achieve their best performance only with a smaller learning rate of 0.0001, while larger learning rates such as 0.01 and 0.001 lead to unstable optimization and degraded accuracy. (see supplementary material) 
These results indicate that decoder-only LoRA adaptation is relatively stable, allowing the use of a larger learning rate and enabling faster convergence during test-time optimization.

\noindent\textbf{Relation Between LoRA and Low-Dimensional Features.}
To analyze why low-rank adaptation is effective for decoder-only test-time optimization, we examine how weight updates from full fine-tuning are distributed across low-dimensional subspaces.
For each input sample, we fine-tune a single decoder layer while freezing all others, and compute the weight difference $\Delta$.
Applying SVD to $\Delta$, we measure the fraction of energy captured by the top-$r$ singular values.

As shown in ~\figref{fig:svd_energy}-(c), most decoder layers exhibit highly concentrated energy in a few dominant directions.
In particular, rank $r=4$ already captures over 80\% of the energy in most layers, while rank $r=8$ consistently explains more than 90\%, and in some cases nearly all of the energy.
This indicates that meaningful test-time adaptation naturally lies in a low-dimensional subspace, providing strong justification for LoRA-based adaptation.

\noindent\textbf{Impact of Adaptation Strategies.}
We conduct an ablation study on the NYUv2 dataset to analyze the effect of different adaptation strategies, including encoder-only fine-tuning, decoder-only fine-tuning, full fine-tuning, encoder-only LoRA, decoder-only LoRA, and full LoRA.

For decoder-only LoRA, we adopt a learning rate of 0.01 and 40 optimization epochs, as this configuration achieves both the best performance and the fastest stable convergence. In contrast, the other adaptation strategies are trained with a learning rate of 0.001 for 100 epochs, since they require substantially more iterations to reach stable convergence and attain their best performance. Each method is evaluated under its best-performing stable configuration to ensure a fair comparison.
As shown in ~\tabref{tab:abl_main}-(a), decoder-only LoRA achieves both the highest accuracy and the fastest adaptation. Owing to the significantly reduced number of trainable parameters, it remains stable even with a larger learning rate, enabling rapid convergence within fewer epochs. These factors collectively result in superior efficiency–accuracy trade-offs.
~\tabref{tab:abl_main}-(b) presents the sensitivity analysis with respect to the LoRA rank. We observe that a moderate rank of $r=8$ yields the best performance, while both smaller and larger ranks lead to inferior results.
%
%



\begin{figure}[t!]
    \centering
    \includegraphics[width=\linewidth]{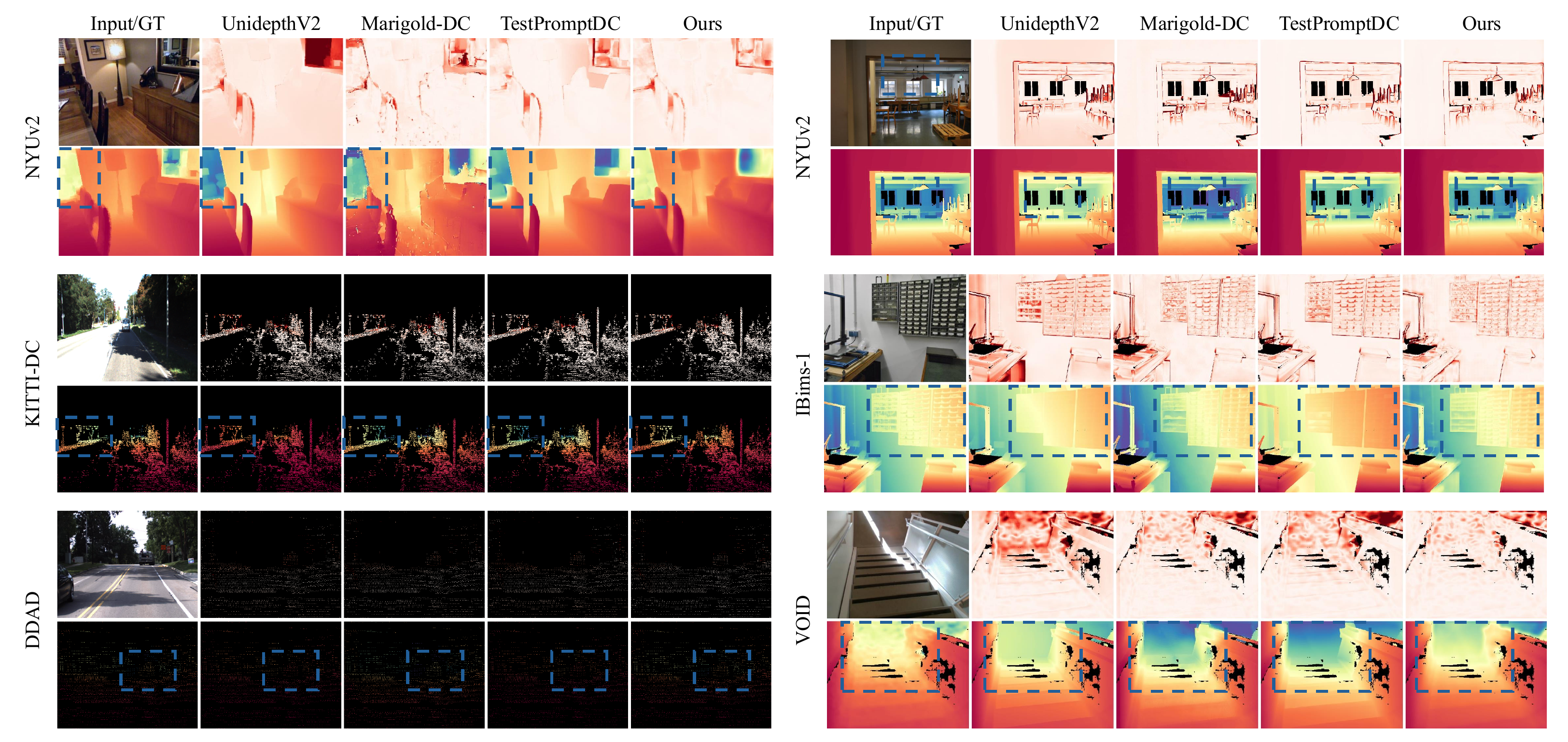}
    \caption{Qualitative results on multiple datasets. For each dataset, the top row shows the error maps with respect to the ground truth, and the bottom row shows the corresponding depth predictions. Blue dashed boxes highlight representative regions for easier comparison; readers are encouraged to zoom in for detailed inspection.}
    \label{fig:qualitative_res}
\end{figure}

\section{Qualitative Results}
As shown in~\figref{fig:qualitative_res}, our method achieves more effective structural correction by directly adapting the decoder weights that generate the final depth map. Unlike input- or feature-level conditioning approaches, such as visual prompting or feature modification, which alter input representations or intermediate features while keeping the depth prediction function fixed, our approach operates in the decoder parameter space that governs metric scale and geometric structure. This enables more direct resolution of scale inconsistency and structural misalignment, resulting in sharper and more coherent depth predictions, particularly in regions with sparse or missing depth measurements.

~\figref{fig:last} illustrates the error maps across optimization iterations. In the early iterations, visible artifacts and structural inconsistencies remain. As optimization progresses, these artifacts are gradually reduced, leading to cleaner and more geometrically consistent depth predictions in later iterations.

\begin{figure}[t!]
    \centering
    \includegraphics[width=\linewidth]{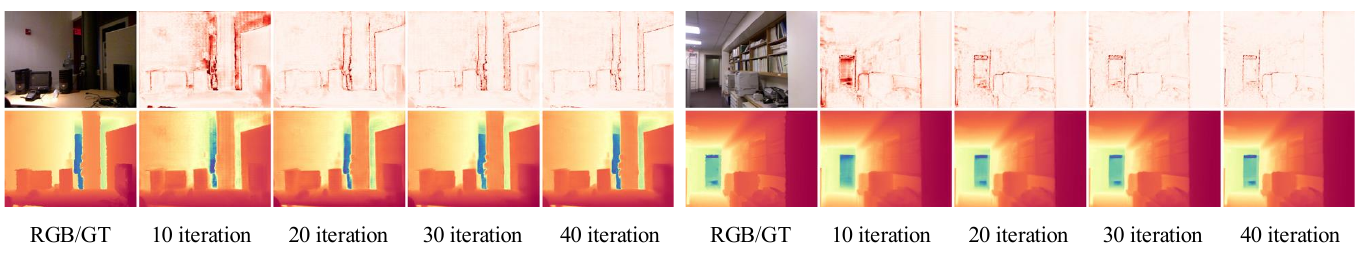}
    \caption{Error maps and depth predictions over optimization iterations. The top row visualizes the error maps, while the bottom row presents the corresponding predicted depth maps.}
    \label{fig:last}
\end{figure}

\section{Limitation and Future Work}
Although our method achieves the best performance and the fastest inference among existing single-image zero-shot depth completion approaches, it still relies on test-time parameter updates and thus does not yet operate in strict real-time.
A promising future direction is to extend our framework to video-based depth completion, where adapted decoder parameters can be reused across consecutive frames.
By exploiting temporal coherence, the optimization cost can be amortized over time, potentially enabling near real-time or real-time deployment.

\section{Conclusion}
In this work, we analyzed the relationship between intermediate representations of depth foundation models and their final depth predictions.
Through layer-wise correlation and PCA analyses, we showed that strongly depth-aligned signals emerge predominantly in the decoder and are concentrated within a low-dimensional subspace.
Motivated by this observation, we proposed \emph{Low-Rank Decoder Adaptation}, a lightweight test-time optimization method that updates only the decoder’s low-dimensional parameters using sparse depth supervision.
By restricting adaptation to a compact and depth-relevant subspace, our method achieves state-of-the-art performance among zero-shot depth completion approaches while significantly reducing computational and parameter overhead.
We hope that our findings provide useful insights into the design of efficient test-time adaptation strategies and contribute to making zero-shot depth completion more practical for real-world applications.

\appendix

\section{Supplementary Overview}

This supplementary material provides additional details and analyses that complement the main paper.
For clarity, we summarize the main purpose of each section below:

\begin{itemize}
\item \textbf{Important implementation details.} This section summarizes several implementation choices that are important for fair and reproducible evaluation.

\item \textbf{Additional Runtime and Memory Results without Deterministic Constraints.}
The main paper reports runtime and memory under deterministic settings for reproducibility.
In contrast, this supplementary material additionally presents results obtained without deterministic constraints, reflecting a more practical execution scenario.
    
    \item \textbf{Additional Results on Depth AnythingV2.}
    We report additional experimental results based on Depth AnythingV2.

    \item \textbf{Additional Qualitative Results.}
    We present further qualitative examples that were omitted from the main paper due to space limitations.

    \item \textbf{Detailed Experimental Settings and Protocol Rationale.}
    We provide the detailed experimental settings used in the main paper and clarify the rationale behind key protocol choices.
    
    \item \textbf{Empirical Analysis of Low-Rank Structure and PCA.} This section examines how the low-dimensional principal subspace of decoder features relates to the observed low-rank structure of test-time adaptation.
    
    \item \textbf{A Local Linearized Interpretation of Low-Rank Update Structure.}
    We provide an additional analysis of the relationship between low-rank feature structure and low-rank weight adaptation.
    Rather than aiming for a complete theory of nonlinear adaptation dynamics, this section offers a local linearized interpretation that is consistent with our empirical observations.
\end{itemize}

\section{Important Implementation Details}
For all datasets, we resize the input resolution so that both height and width are divisible by 14.
This is required by the patch size of the DINOv2 backbone, and the same practice is also followed by prior methods~\cite{jeong2025test}.
\hl{For TestPromptDC in particular, resolutions larger than approximately $1280\times720$ lead to out-of-memory (OOM) errors.
Therefore, for high-resolution datasets such as DDAD, we resize the input to a resolution close to $1274\times728$.}
To ensure reproducibility, we provide the code used for all experiments in the supplementary material.
Additional implementation details can be found in the accompanying code.

\section{Additional Runtime and Memory Results without Deterministic Constraints.}

\noindent\textbf{Speed comparison.}
Table~\ref{tab:runtime_wide_seconds} reports the total adaptation time in seconds.
The results show that decoder-only adaptation is consistently the fastest test-time optimization strategy across different backbone sizes and input resolutions.
This is mainly because our method runs the encoder only once, caches the extracted image features, and performs all subsequent optimization steps only through the decoder.
In contrast, TestPromptDC and encoder-only adaptation repeatedly recompute the encoder during test-time optimization, which leads to substantially higher computational cost.
As a result, the runtime gap becomes increasingly pronounced as the input resolution and model size grow.

We also observe that, at very small resolutions such as $224\times224$, Depth AnythingV2-B can be slightly faster than Depth AnythingV2-S under TestPromptDC.
We attribute this behavior to hardware-level benchmarking effects rather than a genuine reversal of computational efficiency.
When the input resolution is very small, fixed overheads such as kernel launch latency, optimizer steps, and autograd bookkeeping can dominate the measured runtime, and a slightly larger model may occasionally achieve better GPU utilization.
As the input resolution increases, however, the expected scaling trend becomes clear, and the smaller backbone is consistently faster.

\begin{table*}[t]
\centering
\caption{Total adaptation time for 40 iterations across different backbones and input resolutions. Lower is better.}
\label{tab:runtime_wide_seconds}
\resizebox{\textwidth}{!}{%
\begin{tabular}{l|ccccc|ccccc|ccccc}
\toprule
\multirow{2}{*}{\textbf{Backbone}}
& \multicolumn{5}{c|}{\textbf{TestPromptDC}}
& \multicolumn{5}{c|}{\textbf{Encoder-Only}}
& \multicolumn{5}{c}{\textbf{Decoder-Only}} \\
\cmidrule(lr){2-6} \cmidrule(lr){7-11} \cmidrule(lr){12-16}
& 224 & VGA & SD & HD & FHD
& 224 & VGA & SD & HD & FHD
& 224 & VGA & SD & HD & FHD \\
\midrule
Depth AnythingV2-S
& 0.61 & 0.88 & 0.98 & 3.41 & 21.58
& 0.82 & 1.11 & 1.21 & 3.80 & 22.33
& \textbf{0.20} & \textbf{0.18} & \textbf{0.19} & \textbf{0.34} & \textbf{0.60} \\

Depth AnythingV2-B
& 0.52 & 1.78 & 1.99 & 7.42 & 44.66
& 0.70 & 2.35 & 2.59 & 8.52 & 46.76
& \textbf{0.17} & \textbf{0.26} & \textbf{0.28} & \textbf{0.55} & \textbf{1.06} \\

Depth AnythingV2-L
& 1.07 & 4.45 & 5.25 & 20.30 & OOM
& 1.67 & 5.90 & 6.82 & 23.58 & OOM
& \textbf{0.28} & \textbf{0.49} & \textbf{0.55} & \textbf{1.15} & \textbf{2.34} \\
\midrule
UniDepthV2-S
& 0.69 & 0.96 & 1.08 & 3.72 & 23.14
& 0.91 & 1.20 & 1.33 & 4.08 & 23.97
& \textbf{0.23} & \textbf{0.21} & \textbf{0.22} & \textbf{0.39} & \textbf{0.69} \\

UniDepthV2-B
& 0.59 & 1.92 & 2.16 & 7.95 & 46.85
& 0.79 & 2.51 & 2.77 & 8.96 & 48.72
& \textbf{0.20} & \textbf{0.29} & \textbf{0.31} & \textbf{0.60} & \textbf{1.18} \\

UniDepthV2-L
& 1.18 & 4.78 & 5.61 & 21.44 & 74.92
& 1.82 & 6.26 & 7.18 & 24.91 & 79.35
& \textbf{0.31} & \textbf{0.54} & \textbf{0.61} & \textbf{1.24} & \textbf{2.53} \\
\bottomrule
\end{tabular}%
}
\end{table*}

\begin{table*}[t]
\centering
\caption{Peak GPU memory usage (MB) during adaptation across different backbones and input resolutions. Lower is better.}
\label{tab:vram_wide}
\resizebox{\textwidth}{!}{%
\begin{tabular}{l|ccccc|ccccc|ccccc}
\toprule
\multirow{2}{*}{\textbf{Backbone}}
& \multicolumn{5}{c|}{\textbf{TestPromptDC}}
& \multicolumn{5}{c|}{\textbf{Encoder-Only}}
& \multicolumn{5}{c}{\textbf{Decoder-Only}} \\
\cmidrule(lr){2-6} \cmidrule(lr){7-11} \cmidrule(lr){12-16}
& 224 & VGA & SD & HD & FHD
& 224 & VGA & SD & HD & FHD
& 224 & VGA & SD & HD & FHD \\
\midrule
Depth AnythingV2-S
& \textbf{374} & 1588 & 1854 & 8645 & 40572
& 673 & 1981 & 2272 & 9419 & 42231
& 421 & \textbf{673} & \textbf{707} & \textbf{1274} & \textbf{2429} \\

Depth AnythingV2-B
& \textbf{1027} & 3562 & 4067 & 17682 & 81485
& 2153 & 4677 & 5241 & 19455 & 85064
& 1347 & \textbf{1754} & \textbf{1821} & \textbf{2806} & \textbf{4805} \\

Depth AnythingV2-L
& \textbf{3146} & 9771 & 11052 & 43854 & OOM
& 7289 & 13317 & 14755 & 48922 & OOM
& 4280 & \textbf{5007} & \textbf{5123} & \textbf{6904} & \textbf{10554} \\
\midrule
UniDepthV2-S
& \textbf{462} & 1418 & 1597 & 5824 & 10863
& 781 & 1686 & 1864 & 6415 & 11692
& 508 & \textbf{712} & \textbf{751} & \textbf{1268} & \textbf{2187} \\

UniDepthV2-B
& \textbf{1184} & 3187 & 3498 & 9612 & 12844
& 2367 & 3841 & 4175 & 10493 & 13628
& 1412 & \textbf{1716} & \textbf{1779} & \textbf{2598} & \textbf{3972} \\

UniDepthV2-L
& \textbf{2986} & 6215 & 6682 & 11877 & 13652
& 6418 & 7542 & 8019 & 12736 & OOM
& 3897 & \textbf{4316} & \textbf{4471} & \textbf{5968} & \textbf{8127} \\
\bottomrule
\end{tabular}%
}
\end{table*}

\noindent\textbf{Memory Comparison.}
Table~\ref{tab:vram_wide} shows that decoder-only adaptation is also substantially more memory-efficient than TestPromptDC and encoder-only adaptation across nearly all backbone sizes and input resolutions. This benefit arises for the same architectural reason. Since our method caches the encoder features and restricts optimization to the decoder, it avoids storing intermediate activations and gradients for repeated encoder updates, which significantly reduces GPU memory consumption during test-time optimization.

As the backbone size and input resolution increase, this memory advantage becomes even more pronounced. In particular, for Depth AnythingV2-L at FHD resolution, both TestPromptDC and encoder-only adaptation result in out-of-memory errors, whereas decoder-only adaptation remains feasible. These results show that the memory bottleneck of test-time optimization mainly comes from repeatedly optimizing the encoder, and that removing this bottleneck leads to much better scalability in practice.

\begin{table}[t]
  \centering
  \caption{Comparison of training-based methods, depth foundation models, and zero-shot depth completion approaches. Zero-shot methods adapt pretrained depth foundation models at test time using sparse depth supervision without retraining. Results shown in \textcolor{gray}{gray} are taken from Marigold-DC, while cells highlighted in \textcolor{green}{green} indicate the best performance.}
  \label{tab:supp_exp}
  \setlength{\tabcolsep}{2.5pt}
  \scriptsize
  \resizebox{\columnwidth}{!}{%
  \begin{tabular}{
  @{}l
  c c @{\hspace{0.15em}}
  c c @{\hspace{0.15em}}
  c c @{\hspace{0.15em}}
  c c @{\hspace{0.15em}}
  c c@{}}
    \toprule
\multirow{2}{*}{\textbf{Method}} 
  & \multicolumn{2}{c}{\textbf{IBims-1}} 
  & \multicolumn{2}{c}{\textbf{VOID}} 
  & \multicolumn{2}{c}{\textbf{NYUv2}}
  & \multicolumn{2}{c}{\textbf{KITTI DC}}
  & \multicolumn{2}{c}{\textbf{DDAD}} \\
\cmidrule{2-3} \cmidrule{4-5} \cmidrule{6-7} \cmidrule{8-9} \cmidrule{10-11}
 & MAE$\downarrow$ & RMSE$\downarrow$
 & MAE$\downarrow$ & RMSE$\downarrow$
 & MAE$\downarrow$ & RMSE$\downarrow$
 & MAE$\downarrow$ & RMSE$\downarrow$
 & MAE$\downarrow$ & RMSE$\downarrow$ \\  
\midrule
\multicolumn{11}{l}{\textbf{Training-based Depth Completion}} \\
\rowcolor{gray!8}
NLSPN~\cite{park2020non} \textcolor{gray}{\scriptsize (ECCV '20)} & 0.049 & 0.191 & 0.210 & 0.668 & 0.440 & 0.716 & 1.335 & 2.076 & 2.498 & 9.231\\

\rowcolor{gray!8}
CompletionFormer~\cite{zhang2023completionformer} \textcolor{gray}{\scriptsize (CVPR '23)} & 0.058 & 0.206 & 0.261 & 0.726 & 0.186 & 0.374 & 0.952 & 1.935 & 2.518 & 9.471\\

\rowcolor{gray!8}
VPP4DC~\cite{bartolomei2024revisiting} \textcolor{gray}{\scriptsize (3DV '24)} & 0.062 & 0.228 & 0.148 & 0.543 & 0.077 & 0.247 & 0.413 & 1.609 & 1.344 & 6.781\\

\rowcolor{gray!8}
DepthLab~\cite{liu2024depthlab} \textcolor{gray}{\scriptsize (arXiv '24)} & 0.098 & 0.198 & 0.214 & 0.602 & 0.184 & 0.276 & 0.921 & 2.171 & 4.498 & 8.379\\

PromptDA~\cite{lin2025prompting} \textcolor{gray}{\scriptsize (CVPR '25)} & 0.076 & 0.207 & 0.133 & 0.366 & 0.104 & 0.227 & 1.002 & 3.187 & \cellcolor{green!15}\textbf{0.998} & \underline{3.354}\\

PromptDA (scale shift)~\cite{lin2025prompting} \textcolor{gray}{\scriptsize (CVPR '25)} & 0.071 & 0.201 & 0.132 & 0.357 & 0.107 & 0.222 & 1.041 & 2.863 & \underline{1.125} & \cellcolor{green!15}\textbf{3.294}\\

\midrule
\multicolumn{11}{l}{\textbf{Depth Foundation Models}} \\
Marigold~\cite{ke2024repurposing} \textcolor{gray}{\scriptsize (CVPR '24)} & 2.785 & 2.972 & 1.294 & 1.523 & 2.315 & 2.466 & 15.450 & 19.093 & 23.640 & 28.456\\
Marigold (scale shift)~\cite{ke2024repurposing} \textcolor{gray}{\scriptsize (CVPR '24)} & 0.166 & 0.287 & 0.200 & 0.411 & 0.374 & 0.536 & 1.863 & 3.602 & 5.487 & 8.136 \\

DepthAnythingV2~\cite{piccinelli2025unidepthv2} \textcolor{gray}{\scriptsize (CVPR '25)}
& 0.387 & 0.472
& 0.292 & 0.472
& 1.160 & 1.258
& 2.301 & 3.980
& 3.739 & 6.706 \\

DepthAnythingV2 (scale shift)~\cite{piccinelli2025unidepthv2} \textcolor{gray}{\scriptsize (CVPR '25)}
& 0.129 & 0.224
& 0.159 & 0.357
& 0.205 & 0.310
& 1.701 & 3.293
& 2.966 & 5.504 \\

\midrule
\multicolumn{11}{l}{\textbf{Zero-shot Depth Completion}} \\
Marigold-DC~\cite{viola2025marigold} \textcolor{gray}{\scriptsize (ICCV '25)} & 0.061 & 0.189 & 0.106 & 0.322 & 0.093 & 0.216 & 0.568 & 1.663 & 1.795 & 4.141\\

DepthAnythingV2+TestPromptDC~\cite{jeong2025test} \textcolor{gray}{\scriptsize (ICCV '25)}
& 0.063 & 0.194
& 0.098 & \cellcolor{green!15}\textbf{0.299}
& 0.094 & 0.193
& 0.504 & 1.580
& - & - \\

DepthAnythingV2+Ours
& \cellcolor{green!15}\textbf{0.046} & \cellcolor{green!15}\textbf{0.137}
& \cellcolor{green!15}\textbf{0.095} & \underline{0.325}
& \cellcolor{green!15}\textbf{0.058} & \cellcolor{green!15}\textbf{0.124}
& \cellcolor{green!15}\textbf{0.433} & \cellcolor{green!15}\textbf{1.389}
& 1.751 & 4.388 \\
    \bottomrule
  \end{tabular}%
  }
\end{table}

\section{Additional Results on Depth AnythingV2}

\noindent\textbf{Accuracy Comparison.}
~\tabref{tab:supp_exp} presents the quantitative comparison results on Depth AnythingV2-L.
Compared with the results on UniDepthV2, our method achieves an even larger performance margin over prior approaches, further establishing a new state of the art.
These results suggest that our decoder-only adaptation strategy is not specific to a single backbone, but generalizes effectively across different depth foundation models.

\section{Additional Qualitative Results}
The examples in Fig.~\ref{fig:supple_fig} are randomly sampled rather than selectively chosen.
Across a wide range of datasets, our method generally produces the most accurate and visually consistent depth predictions.
In particular, our results tend to preserve object boundaries more reliably and show fewer severe local artifacts than prior zero-shot baselines.
We further observe that Marigold-DC can become unstable on some samples, occasionally producing visibly degraded predictions.
By contrast, our method remains consistently stable across samples and yields more reliable predictions overall.
\begin{figure}[t!]
    \centering
    \includegraphics[width=\linewidth]{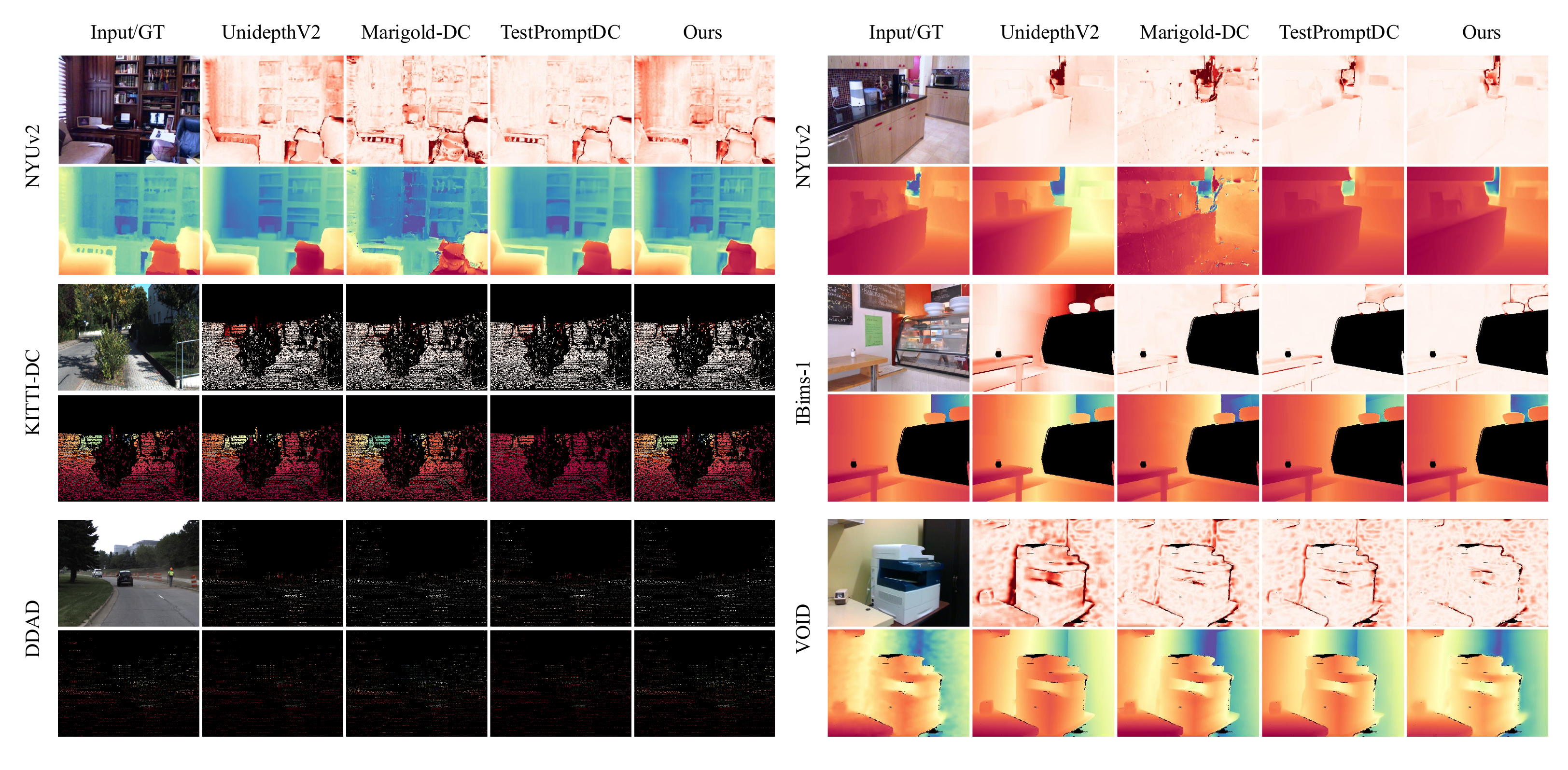}
    \caption{Qualitative results on multiple datasets. For each dataset, the top row shows the error maps with respect to the ground truth, and the bottom row shows the corresponding depth predictions.}
    \label{fig:supple_fig}
\end{figure}

\section{Detailed Experimental Settings and Protocol Rationale.}
\noindent\textbf{Clarification of figures, tables, and evaluation protocols.}
The figures and tables in the main paper are designed for different comparison purposes, and the choice of backbone is determined accordingly.
For clarity, we summarize the role of each figure and table below.

\begin{itemize}
    \item \textbf{Figure~1: Trade-off between training-based and test-time optimization methods.}
    The purpose of Fig.~1 is to illustrate the practical trade-off between training-based depth completion methods and test-time optimization (TTO) methods.
    To make this comparison as fair as possible, we use Depth AnythingV2-L as the common backbone whenever applicable.
    In particular, this choice is motivated by the fact that PromptDA is also built on top of Depth AnythingV2-L.
    Under this matched-backbone setting, Fig.~1 highlights that our method achieves the best reconstruction accuracy among the compared methods, while also being the fastest among TTO-based approaches.
    Therefore, the figure is intended to emphasize the accuracy-efficiency trade-off between training-based methods and TTO methods under a practically aligned comparison protocol.

    \item \textbf{Figure~2: Visual comparison of different adaptation paradigms.}
    Fig.~2 is intended to provide an intuitive comparison of representative adaptation paradigms.
    Panel (a) corresponds to the PromptDA-style training-based paradigm.
    In panel (b), the upper example corresponds to Marigold-DC, and the lower example corresponds to TestPromptDC.
    Panel (c) presents our method.
    As in Fig.~1, we use Depth AnythingV2-L in comparisons involving PromptDA-style methods, so that the visual comparison remains consistent with the matched-backbone setting.
    The goal of this figure is not to serve as a benchmark table, but to visually clarify the difference between training-based adaptation, prior test-time optimization strategies, and our decoder-only adaptation framework.

    \item \textbf{Table~1: Quantitative comparison with prior test-time optimization methods.}
    The purpose of Table~1 is to provide the main quantitative comparison with prior zero-shot depth completion methods, especially previous state-of-the-art TTO approaches.
    For this reason, we follow the evaluation protocol of TestPromptDC and use UniDepthV2 as the backbone in this table.
    This enables the most direct and controlled comparison with the strongest prior TTO baseline under the standard protocol used in recent literature.

    %
    %
    %

    \item \textbf{Why PromptDA-based results can appear particularly strong.}

    PromptDA takes as input a dense low-resolution depth condition, which is more informative than directly using sparse depth samples.
    By contrast, many prior methods are designed to operate directly from sparse depth inputs.
    To provide PromptDA with an appropriate input format, we first convert sparse depth into a dense low-resolution depth map through interpolation, and then use it as the input condition.
    This substantially improves performance, and helps explain why PromptDA-based methods can appear particularly strong relative to methods that are evaluated from raw sparse depth alone.

    \item \textbf{Table~2: Runtime comparison under deterministic settings.}
    The runtime results reported in Table~2 of the main paper are measured under \textbf{deterministic settings} to ensure reproducibility.
    Since deterministic execution may introduce additional overhead, these measurements can be slower than those obtained in a more practical non-deterministic setting.
    For completeness, the supplementary material additionally reports runtime without deterministic constraints.
    As shown in Supplementary Table~2, the runtime of our method becomes even faster in this practical setting.
    Nevertheless, we intentionally report the deterministic-setting runtime in the main paper, since it provides a more reproducible and conservative comparison.
\end{itemize}

\section{Empirical Analysis of Low-Rank Structure and PCA}

In this section, we provide additional empirical evidence linking the low-dimensional structure of decoder features to the effectiveness of low-rank adaptation.
While the main paper shows that decoder representations exhibit strong alignment with the final depth prediction and that decoder weight updates are highly low-rank, the following analyses further examine whether these two observations are functionally connected.
To this end, we conduct two additional studies:
1) PCA-subspace projection ablations, and
2) alignment analysis between feature covariance and weight-update spectra.

\noindent\textbf{PCA Subspace Projection Ablation.}
To test whether the principal decoder subspace is functionally important for adaptation, we project decoder features onto different subspaces during test-time optimization.
Specifically, we compare four settings:
no feature projection, projection onto the top-$k$ principal subspace, projection onto its orthogonal complement, and projection onto a random subspace of the same dimension.
If the low-rank decoder structure is genuinely responsible for effective adaptation, most of the adaptation gain should be preserved within the top principal subspace, whereas the orthogonal and random subspaces should lead to substantially weaker adaptation.

\begin{table}[t]
\centering
\scriptsize
\setlength{\tabcolsep}{4pt}
\caption{PCA-subspace projection ablation on NYUv2 using decoder-only LoRA.}
\label{tab:pca_projection}
\begin{tabular}{lccc}
\toprule
Setting & Subspace Dim. & MAE $\downarrow$ & RMSE $\downarrow$ \\
\midrule
No feature projection      & Full    & \textbf{0.109} & \textbf{0.181} \\
Top-PC only                & 4       & 0.116 & 0.194 \\
Top-PC only                & 8       & 0.112 & 0.187 \\
Top-PC only                & 16      & 0.110 & 0.183 \\
Orthogonal to Top-8        & Full-8  & 0.125 & 0.214 \\
Orthogonal to Top-16       & Full-16 & 0.128 & 0.220 \\
Random subspace            & 8       & 0.121 & 0.205 \\
Random subspace            & 16      & 0.118 & 0.199 \\
\bottomrule
\end{tabular}
\end{table}

As shown in Table~\ref{tab:pca_projection}, restricting adaptation to the top principal subspace preserves most of the performance gain of the default decoder-only LoRA setting.
In contrast, projecting features onto the orthogonal complement or onto a random subspace leads to substantially weaker performance.
These results suggest that adaptation-relevant information is concentrated in a low-dimensional principal subspace of decoder features.
Notably, even a relatively small principal subspace already recovers most of the benefit of the default setting, supporting the view that decoder adaptation is effectively low-dimensional.

\noindent\textbf{Alignment Between Feature Covariance and Weight Updates.}
We next examine whether the low-rank structure of decoder features is directly reflected in the weight updates induced by adaptation.
For each decoder layer, we compute the feature covariance spectrum and the singular value spectrum of the corresponding decoder update.
We then measure how strongly the dominant feature subspace aligns with the dominant update directions.
If low-rank feature structure indeed drives low-rank adaptation, layers with stronger concentration of feature energy in top principal components should also exhibit stronger concentration of update energy in top singular values.

\begin{table}[t]
\centering
\scriptsize
\setlength{\tabcolsep}{4pt}
\caption{Alignment between decoder feature covariance and update spectra on NYUv2.}
\label{tab:feature_update_alignment}
\begin{tabular}{lcccc}
\toprule
Layer & Feature Top-8 Energy $\uparrow$ & Update Top-8 Energy $\uparrow$ & Cosine Sim. $\uparrow$ & Rank-8 MAE $\downarrow$ \\
\midrule
Decoder Stage 1 & 0.88 & 0.81 & 0.73 & 0.118 \\
Decoder Stage 2 & 0.91 & 0.86 & 0.79 & 0.114 \\
Decoder Stage 3 & 0.94 & 0.90 & 0.84 & 0.111 \\
Decoder Stage 4 & 0.95 & 0.92 & 0.87 & 0.109 \\
\midrule
Average         & 0.92 & 0.87 & 0.81 & 0.113 \\
\bottomrule
\end{tabular}
\end{table}

As shown in Table~\ref{tab:feature_update_alignment}, layers with stronger concentration of feature energy in the principal subspace also exhibit more strongly low-rank weight updates.
This provides a more direct bridge between the PCA analysis of decoder features and the effectiveness of low-rank parameter updates such as LoRA.

\section{A Local Linearized Interpretation of Low-Rank Update Structure}

In this section, we provide a simple mathematical interpretation linking low-dimensional decoder features to low-rank gradient and update structure.
Our goal is not to claim a complete theory of the full nonlinear adaptation dynamics.
Instead, we formalize what can be shown in an idealized linear regime, and then explain why the same intuition remains relevant as a local piecewise-linear interpretation for DPT-style decoders.
This interpretation is consistent with our empirical findings that most adaptation benefit is preserved within a small principal decoder subspace, whereas orthogonal or random subspaces lead to substantially weaker adaptation, and that layers with stronger feature concentration also exhibit more strongly low-rank updates.

\paragraph{Linear layer.}
Consider a decoder layer with output
\[
y = Wx,
\qquad
W \in \mathbb{R}^{m \times d}, \quad x \in \mathbb{R}^{d}.
\]
Let $\mathcal{L}$ be a differentiable loss, and let
\[
g := \frac{\partial \mathcal{L}}{\partial y} \in \mathbb{R}^{m}
\]
denote the backpropagated error signal.
By the chain rule, the gradient with respect to the layer weight is
\[
\frac{\partial \mathcal{L}}{\partial W} = g x^\top.
\]
Thus, the gradient is an outer product between the output-side error signal and the input feature.

\begin{proposition}[Exact subspace-constrained gradient structure]
Assume that all input features $x$ lie in an $r$-dimensional subspace $\mathcal{U} \subset \mathbb{R}^{d}$.
Let $P \in \mathbb{R}^{d \times r}$ be a matrix whose columns form an orthonormal basis of $\mathcal{U}$.
Then every feature can be written as
\[
x = Pz,
\qquad
z \in \mathbb{R}^{r}.
\]
Under this assumption, the gradient factorizes as
\[
\frac{\partial \mathcal{L}}{\partial W}
=
g z^\top P^\top,
\]
and therefore satisfies
\[
\mathrm{rank}\!\left(\frac{\partial \mathcal{L}}{\partial W}\right) \le r.
\]
Moreover, the row space of $\frac{\partial \mathcal{L}}{\partial W}$ is contained in $\mathcal{U}$.
\end{proposition}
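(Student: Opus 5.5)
The plan is a direct linear-algebra argument built on the chain-rule identity $\partial\mathcal{L}/\partial W = g x^\top$ established just above the proposition. First, I will justify the representation $x = Pz$. Since the columns of $P$ form an orthonormal basis of $\mathcal{U}$ and $x \in \mathcal{U}$, I set $z := P^\top x \in \mathbb{R}^r$. Then $Pz = PP^\top x = x$, because $PP^\top$ is the orthogonal projector onto $\mathcal{U}$ and fixes every vector in $\mathcal{U}$. Substituting into the outer-product formula gives
\[
\frac{\partial \mathcal{L}}{\partial W} = g (Pz)^\top = g z^\top P^\top ,
\]
which is the claimed factorization.

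For the rank bound, I will write the gradient as the product $(g z^\top)\,P^\top$, where $P^\top \in \mathbb{R}^{r\times d}$. Then I apply $\mathrm{rank}(AB) \le \min(\mathrm{rank} A, \mathrm{rank} B)$, which gives $\mathrm{rank} \le \mathrm{rank}(P^\top) = r$. For a single sample the bound is actually $\le 1$, since the gradient is an outer product. I will remark that the stated bound $r$ is the one that survives when gradients are summed over many spatial locations or samples. In that case $\sum_k g_k x_k^\top = \big(\sum_k g_k z_k^\top\big) P^\top = G Z^\top P^\top$, which still has the form $M P^\top$ with $M \in \mathbb{R}^{m\times r}$. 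The same rank inequality then gives rank at most $r$. I will phrase the proof for this general summed case, since a convolutional or per-pixel decoder layer accumulates exactly such sums, and the single-sample case follows from it.

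For the row-space statement, every row of $M P^\top$ is $m_i^\top P^\top = (P m_i)^\top$, and $P m_i \in \mathrm{col}(P) = \mathcal{U}$. So the row space is contained in $\mathcal{U}$. Equivalently, the gradient annihilates $\mathcal{U}^\perp$: for $v \perp \mathcal{U}$ we have $P^\top v = 0$.

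There is no real obstacle here. The only points needing care are verifying $PP^\top x = x$ from orthonormality, and stating explicitly that the summed (multi-location) gradient keeps the $MP^\top$ form. That second point is what makes the bound $r$ rather than $1$ the meaningful one.
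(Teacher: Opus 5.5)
Your proof is correct and follows the paper's argument: substitute $x=Pz$ into $gx^\top$, bound the rank by the inner dimension $r$ of the factorization through $P^\top$, and get the row-space claim from the right factor $P^\top$. Your additions are also correct: deriving $z=P^\top x$ from the projector $PP^\top$, noting that a single-sample gradient has rank at most $1$, and extending to summed gradients of the form $MP^\top$, which is essentially what the paper's accumulated-update corollary does.
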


\begin{proof}
Since $x=Pz$, substituting into
\[
\frac{\partial \mathcal{L}}{\partial W} = g x^\top
\]
gives
\[
\frac{\partial \mathcal{L}}{\partial W}
=
g(Pz)^\top
=
gz^\top P^\top.
\]
Because $gz^\top \in \mathbb{R}^{m \times r}$, we obtain
\[
\mathrm{rank}\!\left(\frac{\partial \mathcal{L}}{\partial W}\right)
=
\mathrm{rank}(gz^\top P^\top)
\le r.
\]
Also, right multiplication by $P^\top$ implies that every row lies in $\mathrm{span}(P)=\mathcal{U}$.
\end{proof}

\begin{proposition}[Approximate low-rank gradient structure]
Suppose each feature admits a decomposition
\[
x = Pz + \varepsilon,
\]
where $P \in \mathbb{R}^{d \times r}$ has orthonormal columns and $\varepsilon$ denotes the residual outside the principal subspace.
Then the gradient decomposes as
\[
\frac{\partial \mathcal{L}}{\partial W}
=
gz^\top P^\top + g\varepsilon^\top.
\]
The first term has rank at most $r$, while the second term is a residual perturbation satisfying
\[
\|g\varepsilon^\top\|_F = \|g\|_2 \|\varepsilon\|_2.
\]
Hence, when the off-subspace residual is small, the gradient is well approximated by a rank-$r$ matrix.
\end{proposition}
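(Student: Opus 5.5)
The plan is to substitute the decomposition $x = Pz + \varepsilon$ into the outer-product form $\frac{\partial \mathcal{L}}{\partial W} = g x^\top$ derived just before the first proposition, and to split the resulting gradient into its two terms. Since transposition is linear, $x^\top = z^\top P^\top + \varepsilon^\top$, and distributing $g$ on the left gives
\[
\frac{\partial \mathcal{L}}{\partial W} = g z^\top P^\top + g \varepsilon^\top .
\]
This identity is purely algebraic. It needs no assumption on $\varepsilon$ beyond its definition as $x - Pz$, and it holds whether or not $\varepsilon$ is orthogonal to $\mathrm{span}(P)$.

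For the first term, I will reuse the argument of the preceding proposition (exact subspace-constrained gradient structure) verbatim. The matrix $g z^\top P^\top$ factors as $(g z^\top) P^\top$ with $P^\top \in \mathbb{R}^{r \times d}$, so its rank is at most $r$. In fact $g z^\top$ is an outer product, so the rank is at most $1 \le r$ per sample. Summed over a batch of samples, the row space still lies in $\mathrm{span}(P)$, so the rank stays at most $r$. I will note this to explain why $r$, rather than $1$, is the meaningful bound.

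For the residual term, I will compute the Frobenius norm of a rank-one outer product directly:
\[
\|g\varepsilon^\top\|_F^2 = \sum_{i,j} g_i^2 \varepsilon_j^2 = \Big(\sum_i g_i^2\Big)\Big(\sum_j \varepsilon_j^2\Big) = \|g\|_2^2 \|\varepsilon\|_2^2 .
\]
Taking square roots gives the stated equality. To make the final sentence ("well approximated by a rank-$r$ matrix") precise, I will combine this with the decomposition. The distance from $\frac{\partial \mathcal{L}}{\partial W}$ to the rank-$\le r$ matrix $g z^\top P^\top$ is exactly $\|g\|_2\|\varepsilon\|_2$. By Eckart--Young, the best rank-$r$ approximation error is therefore at most $\|g\|_2\|\varepsilon\|_2$, which is small relative to $\|g\|_2\|x\|_2$ whenever $\|\varepsilon\|_2 \ll \|x\|_2$.

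I do not expect a genuine obstacle, since every step is a one-line identity. The only point needing care is interpreting ``small'' in the last claim, which I will phrase in relative terms as $\|\varepsilon\|_2/\|x\|_2$. For the batched case, the residuals add up to $\sum_k g_k \varepsilon_k^\top$. I will bound this sum by $\sum_k \|g_k\|_2\|\varepsilon_k\|_2$ using the triangle inequality, rather than claim an equality there.
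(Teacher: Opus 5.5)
Your proposal is correct and follows the paper's proof essentially step for step: substitute $x = Pz+\varepsilon$ into $g x^\top$, invoke the previous proposition for the rank bound on $g z^\top P^\top$, and compute $\|g\varepsilon^\top\|_F^2 = \sum_{i,j} g_i^2\varepsilon_j^2 = \|g\|_2^2\|\varepsilon\|_2^2$. Your Eckart--Young remark and the batched triangle-inequality bound are correct additions that quantify the ``well approximated'' conclusion, which the paper leaves informal.
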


\begin{proof}
Substituting $x=Pz+\varepsilon$ into
\[
\frac{\partial \mathcal{L}}{\partial W} = g x^\top
\]
yields
\[
\frac{\partial \mathcal{L}}{\partial W}
=
g(Pz+\varepsilon)^\top
=
gz^\top P^\top + g\varepsilon^\top.
\]
The first term has rank at most $r$ by the previous proposition.
For the residual term,
\[
\|g\varepsilon^\top\|_F^2
=
\sum_{i,j} g_i^2 \varepsilon_j^2
=
\|g\|_2^2 \|\varepsilon\|_2^2,
\]
which implies
\[
\|g\varepsilon^\top\|_F = \|g\|_2 \|\varepsilon\|_2.
\]
\end{proof}

\begin{corollary}[Accumulated update structure]
Consider gradient-based updates over $T$ steps:
\[
W_{t+1} = W_t - \eta_t G_t,
\qquad
G_t = g_t x_t^\top.
\]
If every feature satisfies $x_t \in \mathcal{U}$ for the same $r$-dimensional subspace $\mathcal{U}$, then the accumulated update
\[
\Delta W_T := W_T - W_0 = -\sum_{t=0}^{T-1} \eta_t G_t
\]
can be written as
\[
\Delta W_T = A_T P^\top
\]
for some matrix $A_T \in \mathbb{R}^{m \times r}$, and therefore
\[
\mathrm{rank}(\Delta W_T) \le r.
\]
In the approximate case $x_t = Pz_t + \varepsilon_t$, the accumulated update decomposes into a rank-$r$ term plus a residual term controlled by
\[
\sum_{t=0}^{T-1} \eta_t g_t \varepsilon_t^\top.
\]
\end{corollary}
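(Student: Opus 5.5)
The plan is to unroll the recursion and apply the exact subspace-constrained gradient proposition at each step. Summing $W_{t+1}-W_t=-\eta_t G_t$ from $t=0$ to $T-1$ telescopes to
\[
\Delta W_T = -\sum_{t=0}^{T-1}\eta_t G_t .
\]
Under the assumption $x_t\in\mathcal{U}$ for every $t$, the first proposition gives $x_t=Pz_t$ with $z_t=P^\top x_t$, using $P^\top P=I_r$. Hence $G_t=g_t z_t^\top P^\top$ for each $t$.

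Next, I will factor $P^\top$ out on the right, which works because $P$ is the same for all $t$. Setting
\[
A_T := -\sum_{t=0}^{T-1}\eta_t g_t z_t^\top \in\mathbb{R}^{m\times r},
\]
we get $\Delta W_T=A_TP^\top$. The rank bound then follows from $\mathrm{rank}(A_TP^\top)\le\mathrm{rank}(P^\top)= r$, or equivalently from the fact that the row space of $\Delta W_T$ lies in $\mathcal{U}$. This is the one point needing care. A sum of $T$ matrices of rank at most $r$ could in general have rank up to $Tr$, so the bound relies on the row spaces sharing the common subspace $\mathcal{U}$. This is exactly what factoring out the single $P^\top$ captures, so no subadditivity argument is needed.

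For the approximate case, I will substitute $x_t=Pz_t+\varepsilon_t$ as in the approximate low-rank proposition. This gives $G_t=g_tz_t^\top P^\top+g_t\varepsilon_t^\top$, and therefore
\[
\Delta W_T = A_TP^\top-\sum_{t=0}^{T-1}\eta_t g_t\varepsilon_t^\top,
\]
with $A_T$ defined as above. The first term has rank at most $r$ by the exact case.

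For the residual, the triangle inequality combined with the identity $\|g\varepsilon^\top\|_F=\|g\|_2\|\varepsilon\|_2$ from the approximate low-rank proposition yields the explicit bound
\[
\Bigl\|\sum_t\eta_tg_t\varepsilon_t^\top\Bigr\|_F\le\sum_t|\eta_t|\,\|g_t\|_2\|\varepsilon_t\|_2 .
\]
By Eckart--Young, this also bounds the distance from $\Delta W_T$ to the set of rank-$r$ matrices.

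The one subtlety worth flagging is that $g_t$ depends on $W_t$ and so changes across iterations. The argument above never uses this dependence: it needs only the common input subspace, so the result holds regardless of how the error signals evolve.
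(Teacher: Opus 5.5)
Your proof is correct and follows the paper's argument essentially verbatim. You write $x_t = Pz_t$, factor the common $P^\top$ out of $\Delta W_T = -\sum_t \eta_t g_t z_t^\top P^\top$ with the same $A_T$, and substitute $x_t = Pz_t + \varepsilon_t$ for the approximate case. Your explicit residual bound $\sum_t |\eta_t|\,\|g_t\|_2\|\varepsilon_t\|_2$ is a correct sharpening of what the paper leaves as the informal phrase ``controlled by''; it also bounds the tail singular-value energy of $\Delta W_T$.
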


\begin{proof}
If $x_t=Pz_t$, then
\[
G_t = g_t x_t^\top = g_t z_t^\top P^\top.
\]
Therefore,
\[
\Delta W_T
=
-\sum_{t=0}^{T-1} \eta_t g_t z_t^\top P^\top
=
A_T P^\top,
\]
where
\[
A_T := -\sum_{t=0}^{T-1} \eta_t g_t z_t^\top \in \mathbb{R}^{m \times r}.
\]
Hence $\mathrm{rank}(\Delta W_T)\le r$.
The approximate case follows by substituting $x_t=Pz_t+\varepsilon_t$ at each step.
\end{proof}

\noindent\textbf{Remark on nonlinear DPT decoders.}
The above statements are exact only for an idealized linear layer.
Our actual decoder is nonlinear and includes operations such as multi-scale fusion, convolution, and ReLU activations.
Therefore, we do not present the above propositions as an exact characterization of the full DPT decoder.
However, ReLU networks are piecewise linear~\cite{montufar2014number}: when the activation pattern is fixed, each local block behaves as an affine map.
Under such a local view, the same subspace argument applies to the corresponding local Jacobian, implying that low-dimensional feature concentration biases the induced gradients and local update directions toward a low-rank structure, up to perturbations caused by activation-pattern changes and residual off-subspace energy.

\noindent\textbf{Interpretation.}
Taken together, the above results do not constitute a complete theory of nonlinear test-time adaptation.
Rather, they formalize a simple and conservative conclusion:
if decoder features are concentrated in a low-dimensional principal subspace, then in a linearized or locally piecewise-linear regime, the corresponding gradients and accumulated updates are naturally biased toward low-rank structure.
We view this as a local mechanistic interpretation that is consistent with our empirical findings, including the preservation of most adaptation gain within the top principal subspace and the observed alignment between feature covariance concentration and update low-rankness, rather than a global proof of the full nonlinear adaptation dynamics.


%
%
\bibliographystyle{splncs04}
\bibliography{main}
\end{document}